\documentclass{article} 
\usepackage{iclr2021_conference,times}
\usepackage[pdftex]{graphicx}
\usepackage[ruled,vlined]{algorithm2e}
\usepackage{caption}
\usepackage{amssymb}
\usepackage{multirow}
\usepackage{colortbl}
\usepackage{array, tabularx}
\usepackage{amsthm}
\usepackage{amsmath}
\usepackage{setspace}
\usepackage{soul}
\usepackage{cite}
\usepackage{bm}
\usepackage{hyperref}
\usepackage{cleveref}
\usepackage{enumitem}
\usepackage{xcolor}
\usepackage{multirow}
\usepackage[normalem]{ulem}
\usepackage{subfig}
\usepackage{booktabs}
\usepackage{dsfont}
\usepackage{lipsum}

\captionsetup[figure]{font=small,labelfont=small}
\captionsetup[table]{font=small,labelfont=small}

\makeatletter
\newcommand{\printfnsymbol}[1]{%
	\textsuperscript{\@fnsymbol{#1}}%
}
\makeatother

\usepackage{url}

\newcommand\eg{\textit{e.g.}}
\newcommand\ie{\textit{i.e.}}

\begin{document}
	
\title{\vspace{0.25in}
Communication-Computation Efficient \\ Secure Aggregation for Federated Learning}

\author{Beongjun Choi\thanks{Both authors contributed equally to this work.} \ , \hspace{0.5mm} Jy-yong Sohn\printfnsymbol{1},  Dong-Jun Han and Jaekyun Moon \\
School of Electrical Engineering \\
Korea Advanced Institute of Science and Technology (KAIST)\\ 
\texttt{\{bbzang10, jysohn1108, djhan93\}@kaist.ac.kr, jmoon@kaist.edu} \\
}

\newcommand{\fix}{\marginpar{FIX}}
\newcommand{\new}{\marginpar{NEW}}

\newtheorem{theorem}{Theorem}
\newtheorem{lemma}{Lemma}
\newtheorem{sublemma}{Lemma}
\newtheorem{corollary}{Corollary}
\newtheorem{proposition}{Proposition}
\newtheorem{definition}{Definition}
\newtheorem{remark}{Remark}

\maketitle

\begin{abstract}
	Federated learning has been spotlighted as a way to train neural networks using distributed data with no need for individual nodes to share data. Unfortunately, it has also been shown that adversaries may be able to extract local data contents off model parameters transmitted during federated learning.
	A recent solution based on the secure aggregation primitive enabled privacy-preserving federated learning, but at the expense of significant extra communication/computational resources.
	In this paper, we propose a low-complexity scheme that provides data privacy using substantially reduced communication/computational resources relative to the existing secure solution.
	The key idea behind the suggested scheme is to design the topology of secret-sharing nodes as a sparse random graph instead of the complete graph corresponding to the existing solution. 
	We first obtain the necessary and sufficient condition on the graph to guarantee both reliability and privacy.
	We then suggest using the Erd\H{o}s-R\'enyi graph in particular and provide theoretical guarantees on the reliability/privacy of the proposed scheme.
	Through extensive real-world experiments, we demonstrate that our scheme, using only $20 \sim 30\%$ of the resources required in the conventional scheme, maintains virtually the same levels of reliability and data privacy in practical federated learning systems.
\end{abstract}

\vspace{-3mm}
\section{Introduction}
\vspace{-1mm}

Federated learning~\citep{mcmahan2017communication} has been considered as a promising framework for training models in a decentralized manner without explicitly sharing the local private data.
This framework is especially useful in various predictive models which learn from private distributed data, \eg, healthcare services based on medical data distributed over multiple organizations~\citep{brisimi2018federated, xu2019federated} and text prediction based on the messages of distributed clients~\citep{yang2018applied, ramaswamy2019federated}. 
In the federated learning (FL) setup, each device contributes to the global model update by transmitting its local model only; the private data is not shared across the network, which makes FL highly attractive~\citep{kairouz2019advances, yang2019federated}.

Unfortunately, however, FL could still be vulnerable against the adversarial attacks on the data leakage. Specifically, the local model transmitted from a device contains extensive information on the training data, and an eavesdropper can estimate the data owned by the target device~\citep{fredrikson2015model, shokri2017membership, melis2019exploiting}.
Motivated by this issue, the authors of~\citep{bonawitz2017practical} suggested \textit{secure aggregation} (SA), which integrates cryptographic primitives into the FL framework to protect data privacy. However, SA requires significant amounts of additional resources on communication and computing for guaranteeing privacy. Especially, the communication and computation burden of SA  
increases as a quadratic function of the number of clients, which limits the scalability of SA.

\begin{figure}[t]
	\vspace{-6mm}
	\centering
	\small
	\subfloat
	[Existing algorithm]
	{\includegraphics[width=.35\linewidth ]{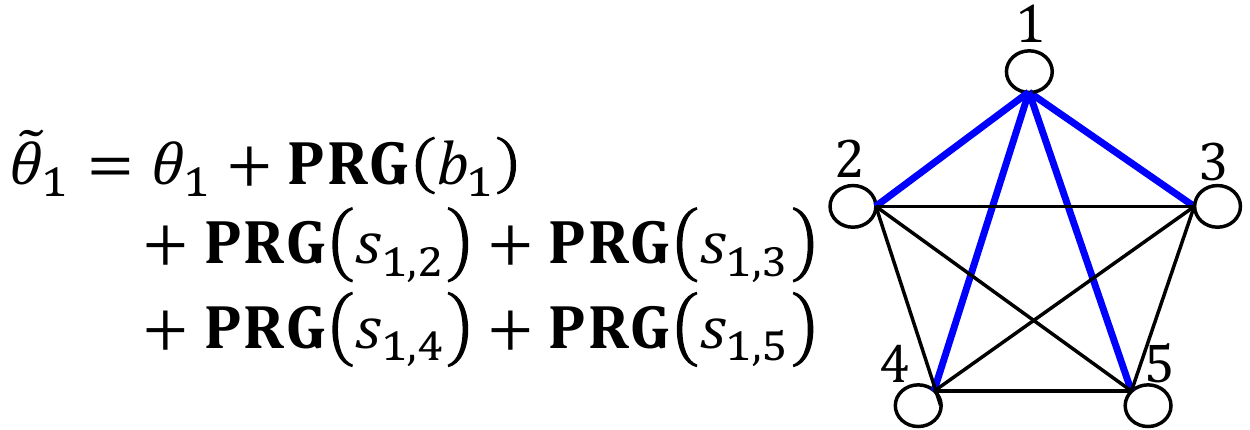}\label{Fig:graph_SA}}
	\quad \quad \quad \quad \quad \quad
	\vspace{-3mm}
	\subfloat
	[Suggested algorithm (CCESA)]
	{\includegraphics[width=.35\linewidth]{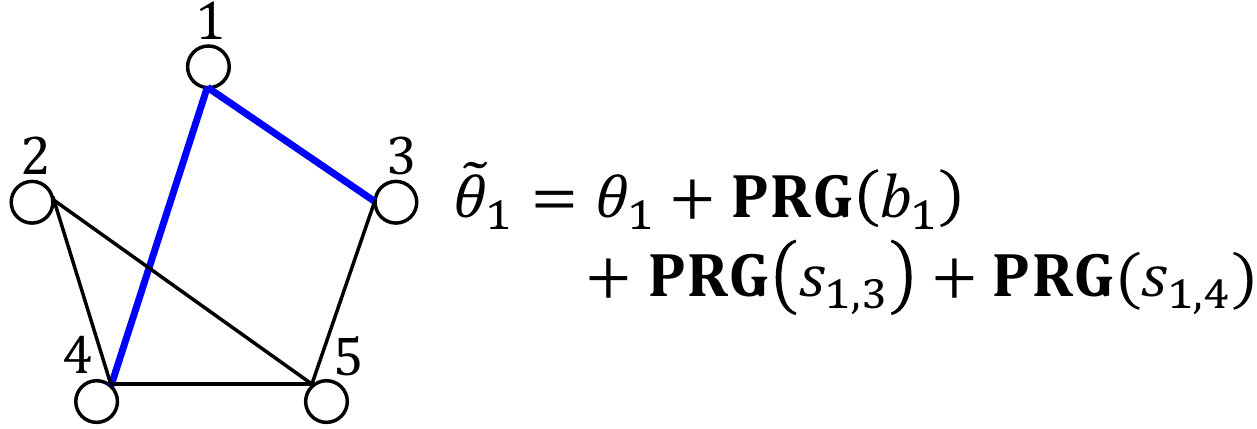}\label{Fig:graph_CESA}}
	\caption{Conventional secure aggregation (SA)~\citep{bonawitz2017practical} versus the suggested communication-computation efficient secure aggregation (CCESA). Via selective secret sharing across only a subset of client pairs, the proposed algorithm reduces the communication cost (for exchanging public keys and secret shares among clients) and computational cost (for generating secret shares and pseudo-random values, and performing key agreements), compared to the existing fully-shared method. CCESA still maintains virtually the same levels of reliability and privacy, as proven by the theoretic analysis of Section~\ref{Sec:Theory}.
	}
	\label{Fig:example}
	\vspace{-3mm}
\end{figure}

\begin{figure}
	\begin{center}
		\includegraphics[width=0.45\linewidth]{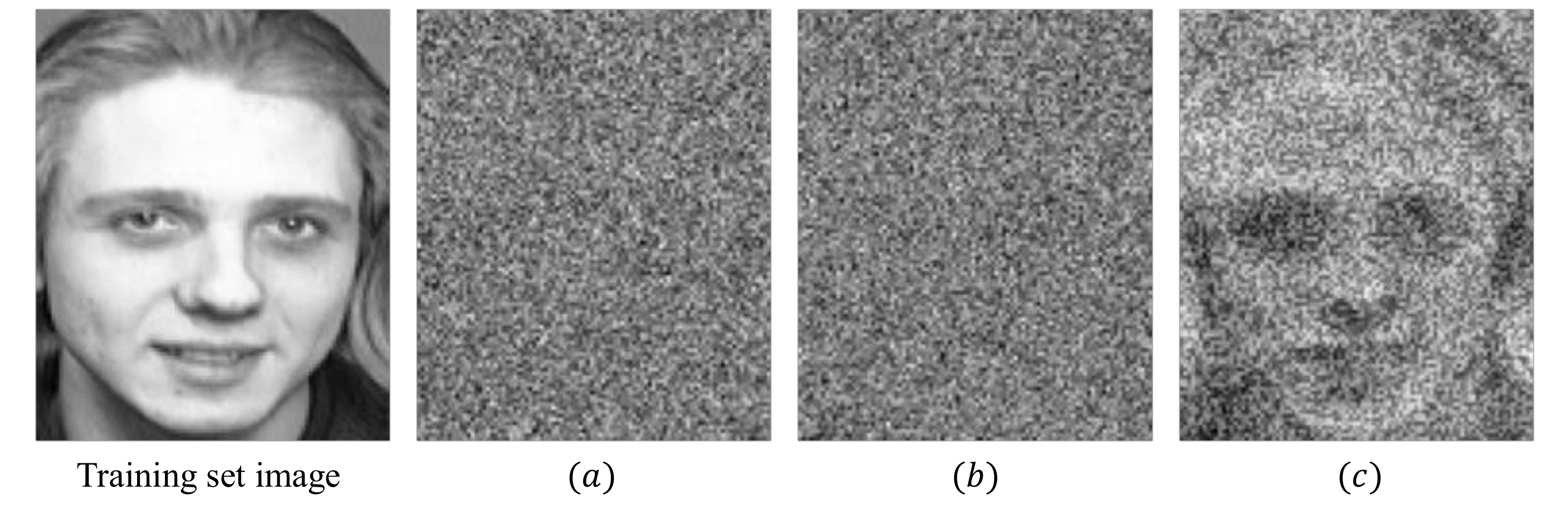}  
	\end{center}
	\vspace{-5mm}
	\caption{A training image and the reconstructed images using model inversion attacks with $(a)$ the proposed scheme (CCESA), $(b)$ existing secure aggregation (SA)~\citep{bonawitz2017practical}, $(c)$ federated averaging with no security measures ~\citep{mcmahan2017communication}. The federated averaging scheme leaks private data from the transmitted model, while SA and the proposed CCESA do not. Note that the required communication/computational resources of CCESA are only 40\% of those of SA. 
		Additional examples are given in Appendix \ref{app:privacyexp}.}%
	\label{Fig:Main}
	\vspace{-5mm}
\end{figure}

\vspace{-1mm}
\paragraph{Contributions}
We propose communication-computation efficient secure aggregation (CCESA) algorithm, which maintains the reliability and data privacy in federated learning with reduced resources on communication and computation compared to conventional secure aggregation of~\citep{bonawitz2017practical}. The key idea is to replace the complete graph topology of~\citep{bonawitz2017practical} with a low-degree graph for resource efficiency, as illustrated in Fig.~\ref{Fig:example}. Under the threat of eavesdroppers, we first provide the necessary and sufficient condition on the sparse graph topology for private and reliable secure aggregation and suggest an explicit graph design rule based on a conceptually simple Erdos-Renyi graph. As a consequence, the required resources of the proposed CCESA reduce by a factor of at least $O(\sqrt{n/ \log n})$ compared to~\citep{bonawitz2017practical}. Here, we acknowledge that parallel work~\citep{bell2020secure} utilizes a Harary graph topology where each client more effectively selects the clients sharing its secret, and consequently further reduces communication/computation costs of secure aggregation. Detailed comparisons are given in Table~\ref{Table:O_notation}. Our mathematical results are also confirmed in experiments on two real datasets of AT\&T face database and CIFAR-10, under practical attack scenarios including the model inversion attack and the membership inference attack. Especially, under the model inversion attack on the face dataset, the results in Fig.~\ref{Fig:Main} show that the suggested scheme achieves near-perfect data privacy by using less amount of resources than SA, while federated averaging without security measures~\citep{mcmahan2017communication} significantly compromises the privacy of the data.
 
\vspace{-2mm}
\paragraph{Related work}

Focusing on the collaborative learning setup with multiple clients and a server, 
previous works have suggested solutions to prevent the information leakage in the communication links between the server and clients.
One major approach utilizes the concept of differential privacy (DP)~\citep{dwork2014algorithmic} by adding artificial random noise to the transmitted models~\citep{wei2020federated,geyer2017differentially,truex2020ldp} or gradients~\citep{shokri2015privacy,abadi2016deep,balcan2012distributed}.
Depending on the noise distribution, DP-based collaborative learning generally exhibits a trade-off between the privacy level and the convergence of the global model.

Another popular approach is deploying secure multiparty computation (MPC) ~\citep{ben2019completeness,damgaard2012multiparty,aono2017privacy,lindell2015efficient, bonawitz2017practical, zhang2018admm,tjell2019privacy,shen2020distributed, so2020turbo} based on the cryptographic primitives including secret sharing and the homomorphic encryption~\citep{leontiadis2014private,leontiadis2015puda,shi2011privacy,halevi2011secure}.
Although these schemes guarantee privacy, they suffer from high communication burdens while reconstructing the secret information distributed over multiple clients.
A notable work~\citep{bonawitz2017practical} suggested \textit{secure aggregation} (SA), which tolerates multiple client failures by applying pairwise additive masking~\citep{acs2011have, elahi2014privex,jansen2016safely,goryczka2015comprehensive}.
A recent work~\citep{so2020turbo} suggested Turbo-aggregate, an approach based on the circular aggregation topology to achieve a significant computation/communication complexity reduction in secure aggregation. 
However, each client in Turbo-aggregate requires a communication cost of at least $4mnR/L$ bits, which is much larger than that of our scheme (CCESA) requiring a communication cost of $\sqrt{n\log n}(2a_K + 5a_S) + mR$\footnote{Here, $m$ is the number of model parameters where each parameter is represented in $R$ bits. $L$ is the number of client groups in Turbo-aggregate. $a_K$ and $a_S$ are the number of bits required for exchanging public keys and the number of bits in a secret share, respectively.}. For example, in a practical scenario with $m=10^6, R=32, n=100, L=10$ and $a_K=a_S=256$, our scheme requires only $3\%$ of the communication bandwidth used in Turbo-aggregate.

The idea of replacing the complete graph with a low-degree graph for communication efficiency has been studied in the areas of distributed learning~\citep{charles2017approximate, sohn2020election} and multi-party computation~\citep{fitzi2007towards,harnik2007many}. 
Very recently, the authors of~\citep{bell2020secure} independently suggested using sparse graphs for reducing the communication/computation
overhead in the secure aggregation framework of~\citep{bonawitz2017practical}. 
Both CCESA and the parallel work of (Bell et al., 2020) utilize selective secret sharing across a subset of client pairs, and design a sparse graph topology. Under an honest-but-curious attack setup, (Bell et al., 2020) uses the Harary graph topology and realizes an excellent efficiency, achieving polylogarithmic communication/computation complexity per client.


\begin{table}[] 
	\vspace{-2mm}
	\centering
	\scriptsize
	\begin{tabular}{|cc|c|c|c|}
		\hline
		&        & CCESA & \citep{bell2020secure}      & SA   \\ \hline
		\multicolumn{2}{|c|}{Graph topology}     &  Erd\H{o}s-R\'enyi graph & Harary graph & Complete graph   \\ \hline
		\multicolumn{1}{|c|}{\multirow{2}{*}{\begin{tabular}[c]{@{}c@{}}Communication \\ cost\end{tabular}}} & Client &  $O(\sqrt{n\log n}+m)$     & $O(\log n + m)$      & $O(n+m)$                         \\
		\multicolumn{1}{|c|}{}                                                                               & Server &  $O(n \sqrt{n\log n} + mn)$     & $O(n\log n + mn)$     & $O(n^2+mn)$                  \\ \hline
		\multicolumn{1}{|c|}{\multirow{2}{*}{\begin{tabular}[c]{@{}c@{}}Computation \\ cost\end{tabular}}}   & Client &  $O(n\log n + m\sqrt{n \log n})$    & $O(\log^2 n + m\log n )$     & $O(n^2+mn)$                        \\
		\multicolumn{1}{|c|}{}                                                                               & Server &  $O(mn\log n + n^2\log n)$    & $O(mn \log n + n\log^2 n)$  & $O(mn^2)$            \\ \hline
	\end{tabular}
	\caption{Communication and computation cost of the proposed CCESA algorithm, scalable secure aggregation algorithm in parallel work~\citep{bell2020secure}, Secure Aggregation (SA) in~\citep{bonawitz2017practical}. 
		Detailed derivations of complexity are given in Appendix \ref{Sec:Resources}.} 
	\label{Table:O_notation}
	\vspace{-3mm}
\end{table}

\section{Background} \label{background}
\vspace{-2mm}
\paragraph{Federated learning}
Consider a scenario with one server and $n$ clients. Each client $i$ has its local training dataset $\mathcal{D}_i = \{(\mathbf{x}_{i,k},y_{i,k})\}_{k=1}^{N_i}$ where $\mathbf{x}_{i,k}$ and $y_{i,k}$ are the feature vector and the label of the $k$-th training sample, respectively.
For each round $t$, the server first selects a set $S_t$ of $cn$ clients $(0< c \leq 1)$ and sends the current global model $\theta_{\text{global}}^{(t)}$ to those selected clients. 
Then, each client $i \in S_t$ 
updates the received model by using the local data $\mathcal{D}_i$ and sends the updated model $\theta_{i}^{(t+1)}$ to the server. 
Finally, the server updates the global model by aggregating local updates from the selected clients, \ie, 
$\theta_{\text{global}}^{(t+1)} \leftarrow \sum_{i\in S_t} \frac{N_i}{N} \theta_{i}^{(t+1)}$
where $N = \sum_{i \in S_t} N_i$. 
\vspace{-2mm}
\paragraph{Cryptographic primitives for preserving privacy}
Here we review three cryptographic tools used in SA~\citep{bonawitz2017practical}. 
First, \textit{$t$-out-of-$n$ secret sharing}~\citep{shamir1979share}
is splitting a secret $s$ into $n$ shares, in a way that any $t$ shares can reconstruct $s$, while any $t-1$ shares provides absolutely no information on $s$.
We denote $t$-out-of-$n$ secret sharing by
$s \xrightarrow{(t,n)} (s_{k})_{k\in [n]},$
where $s_{k}$ indicates the $k^{\text{th}}$ share of secret $s$ and $[n]$ represents the index set $\{1,2,\cdots,n\}$.
Second, the \textit{Diffie-Hellman key agreement} is used to generate a secret $s_{i,j}$ that is only shared by two target clients $i, j \in [n]$. 
The key agreement scheme designs public-private key pairs $(s_u^{PK},s_u^{SK})$ for clients $u \in [n]$ in a way that $s_{i,j}=f({s_i^{PK},s_j^{SK}})=f({s_j^{PK},s_i^{SK}})$ holds for all $i,j \in [n]$ for some key agreement function $f$. The secret $s_{i,j}$ is unknown when neither $s_i^{SK}$ nor $s_j^{SK}$ is provided.
Third, \textit{symmetric authenticated encryption} is encrypting/decrypting message $m$ using a key $k$ shared by two target clients. This guarantees the integrity of the messages communicated by the two clients. 

\vspace{-2mm}
\paragraph{Secure aggregation~\citep{bonawitz2017practical}}
For privacy-preserving federated learning, 
SA has been proposed based on the cryptographic primitives of Shamir's secret sharing, key agreement and symmetric authenticated encryption. 
The protocol consists of four steps: \textbf{Step 0} Advertise Keys, \textbf{Step 1} Share Keys, \textbf{Step 2} Masked Input Collection, and \textbf{Step 3} Unmasking.

Consider a server with $n$ clients where client $i\in[n]$ has its private local model $\theta_i$.
Denote the client index set as $V_0=[n]$.
The objective of the server is to obtain the sum of models $\sum_{i} \theta_i$ without getting any other information on private local models.
In \textbf{Step 0}, client $i\in V_0$ generates key pairs $(s_{i}^{PK},s_{i}^{SK})$ and $(c_{i}^{PK},c_{i}^{SK})$ by using a key agreement scheme.
Then, client $i$ advertises its public keys $(s_{i}^{PK},c_i^{PK})$ to the server.
The server collects the public keys from a client set $V_1 \subset V_0$,
and broadcasts $\{(i,s_{i}^{PK}, c_{i}^{PK })\}_{i \in V_1}$ to all clients in $V_1$.
In \textbf{Step 1}, client $i$ generates a random element $b_i$ and applies $t$-out-of-$n$ secret sharing to generate $n$ shares of $b_i$ and $s_{i}^{SK}$, \ie, 
$b_i \xrightarrow {(t,n)} (b_{i,j})_{j\in [n]}$ 
and $s_i^{SK} \xrightarrow {(t,n)} (s_{i,j}^{SK})_{j\in [n]}.$
By using the symmetric authenticated encryption, client $i$ computes the ciphertext $e_{i,j}$  for all $j \in V_1\backslash \{i\}$, by taking $b_{i,j}$ and $s_{i,j}^{SK}$ as messages and $c_{i,j}=f(c_j^{PK},c_{i}^{SK})$ as a key, where $f$ is the key agreement function.
Finally, client $i$ sends $\{(i,j, e_{i,j})\}_{j\in V_1 \backslash \{i\}} $ to the server.
The server collects the message from at least $t$ clients (denote this set of client as $V_2 \subset V_1$) and sends $\{(i,j,e_{i,j})\}_{i\in V_2}$ to each client $j \in V_2$.
In \textbf{Step 2}, client $i$ computes the shared secret $s_{i,j} = f(s_j^{PK},s_i^{SK})$ for all $j \in V_2 \backslash \{i\}$. Then, client $i$
computes the masked private vector
\begin{equation} \label{maskedinput}
\small{\tilde{\theta}_i = \theta_i + \textbf{PRG} (b_i) + \sum_{j \in V_2 ; i<j} \textbf{PRG}(s_{i,j}) - \sum_{j \in V_2 ; i>j} \textbf{PRG} (s_{i,j}),}
\end{equation}
and sends $\tilde{\theta}_i$ to the server, where \textbf{PRG}$(x)$ indicates a pseudorandom generator with seed $x$ outputting a vector having the dimension identical to $\theta_i$.
Note that the masked vector $\tilde{\theta}_i$ gives no information on private vector $\theta_i$ unless both $s_{i}^{SK}$ and $b_{i}$ are revealed.
The server collects $\tilde{\theta}_i$ from at least $t$ clients (denote this set as $V_3 \subset V_2$), and sends $V_3$ to each client $i \in V_3$.
In \textbf{Step 3}, client $j$ decrypts the ciphertext $\{e_{i,j}\}_{i\in V_2\backslash \{j\}}$ by using the key $c_{i,j}=f(c_i^{PK},c_j^{SK})$ to obtain $\{b_{i,j}\}_{i\in V_2 \backslash \{j\}}$ and $\{s_{i,j}^{SK}\}_{i \in V_2\backslash \{j\}}$. 
Each client $j$ sends a set of shares $\{b_{i,j}\}_{i \in V_3} \cup \{s_{i,j}^{SK}\}_{i \in V_2 \backslash V_3}$ to the server.
The server collects the responds from at least $t$ clients (denote this set of clients as $V_4 \subset V_3$). For each client $i \in V_3$, the server reconstructs $b_i$ from 
$\{b_{i,j}\}_{i \in V_4}$ and computes \textbf{PRG}$(b_i)$. Similarly, for each client $i \in V_2 \backslash V_3$, the server reconstructs $s_{i}^{SK}$ from $\{s_{i,j}^{SK}\}_{i \in V_4}$
and computes \textbf{PRG}$(s_{i,j})$ for all $j\in V_2$.
Using this information, the server obtains the sum of private local models by computing
\begin{equation} \label{calculatez}
\footnotesize{\sum_{i\in V_3} \theta_i =  \sum_{i\in V_3} \tilde{\theta}_i - \sum_{i \in V_3} \textbf{PRG}(b_i) - \sum_{j \in V_2, i \in V_2 \backslash V_3; i<j} \textbf{PRG}(s_{i,j}) + \sum_{j \in V_2, i \in V_2 \backslash V_3; i>j} \textbf{PRG} (s_{i,j}).}
\end{equation}

\section{Suggested algorithm} \label{main}

In the secure aggregation~\citep{bonawitz2017practical},
the public keys $(c_i^{PK},s_i^{PK})$ and the shares of secrets $(s_i^{SK}, b_i)$ are transmitted between clients and the server, which requires additional communication/computational resources compared with the vanilla federated learning. Specifically, since each client $i$ needs to receive information from all other clients $j \neq i$, the required amount of resources increases as a quadratic function of the number of clients $n$. 

In this paper, we suggest a variant of secure aggregation, dubbed as communication-computation efficient secure aggregation (CCESA), which enables to provide a more scalable solution for privacy-preserving federated learning by improving the communication/computational efficiency.
The basic idea of the proposed algorithm is to allow each client to share its public keys and secret shares to a \textit{subset} of other clients, instead of sharing them with all other clients. 
By doing so, compared with SA, 
the suggested scheme achieves two advantages in resource efficiency, without losing the reliability of learning algorithms and the data privacy.
The first advantage is the reduction of the communication cost,
since each node shares its public keys and secrets with less clients. 
The second advantage is a reduction of the computational cost of each client, since a smaller number of masks are used while computing its masked private vector.

The proposed algorithm is specified by the 
\textit{assignment graph} which represents how public keys and secret shares are assigned to the other clients.
Given $n$ clients, the assignment graph $G=(V,E)$ consists of $n$ vertices where the vertex and the edge set of $G$ are represented by $V$ and $E$, respectively.
We set $V=[n]$ where each index $i\in V$ represents client $i$, and the edge $ \{i,j\} \in E $ connecting vertices $i$ and $j$ indicates that client $i$ and $j$ exchange their public keys and secret shares.
For vertex $i\in[n]$, we define $Adj(i):=\big \{j; \{i,j\} \in E \big \}$ as the index set of vertices adjacent to vertex $i$.
In our algorithm, public keys and secrets of client $i$ are shared with clients $j \in Adj(i)$.

\renewcommand\thefigure{\thesection.\arabic{figure}}
\setcounter{figure}{0}

\renewcommand\thetable{\thesection.\arabic{table}}
\setcounter{table}{0}    

\begin{algorithm} 
	\caption{Communication-Computation Efficient Secure Aggregation (CCESA) Protocol}
	\SetKwInput{KwInput}{Input}                
	\SetKwInput{KwOutput}{Output}              
	\DontPrintSemicolon
	\label{CESAdescription}
	\KwInput{Number of clients $n$, assignment graph $G$, 
		privacy thresholds $t_i$ of all clients $i\in[n]$, local models $\theta_i$ of all clients $i\in[n]$, Diffie-Hellman key pairs $(c_i^{PK},c_i^{SK})$, $(s_i^{PK},s_i^{SK})$ of all clients $i\in[n]$ and corresponding key agreement function $f$, pseudo-random generator \textbf{PRG}}
	\SetKwFunction{FMain}{Main}
	\SetKwFunction{FSum}{Sum}
	\SetKwFunction{FSub}{Sub}
	
	\;
	\SetKwProg{Fn}{Step 0. Advertise Keys}{}{}
	\Fn{}{
		\textbf{Client $i$:} \;
		\hspace{1mm} Sends $(i,c_i^{PK},s_i^{PK})$ to the server \;
		\textbf{Server:} \;
		\hspace{1mm} Collects the messages from clients (denote this set of clients as $V_1$) \;
		\hspace{1mm} Sends $\{(i,c_i^{PK},s_i^{PK})\}_{i\in Adj(j)\cap V_1}$ to all clients $j \in V_1$;
	}
	\;
	

	\SetKwProg{Fn}{Step 1. Share Keys}{}{\KwRet}
	\Fn{}{
		\textbf{Client $i$:} \;
		\hspace{1mm} Generates a random element $b_i$ \;
		\hspace{1mm} Applies $t_i$-out-of-$(|Adj(i)| +1)$ secret sharing schemes to $b_i$ and $s_i^{SK}$ \;
		\hspace{6mm} $b_i \xrightarrow{(t_i,|Adj(i)|+1)} (b_{i,j})_{j\in (Adj(i)) \cup \{i\}}$, \hspace{1mm} $s_i^{SK} \xrightarrow{(t_i,|Adj(i)|+1)} (s_{i,j}^{SK})_{j\in Adj(i) \cup \{i\}}$ \;
		\hspace{1mm} Encrypts [$b_{i,j}$, $s_{i,j}^{SK}$] to [$\bar{b}_{i,j}$, $\bar{s}_{i,j}^{SK}$] using the authenticated encryption with key $f(c_j^{PK},c_i^{SK})$ \;
		\hspace{1mm} Sends $\{(i,j,\bar{b}_{i,j},\bar{s}_{i,j}^{SK})\}_{j\in Adj(i) \cap V_1}$ to the server \;
		\textbf{Server:} \;
		\hspace{1mm} Collects the messages from clients (denote this set of clients as $V_2$) \;
		\hspace{1mm} Sends $\{(i,j,\bar{b}_{i,j},\bar{s}_{i,j}^{SK}) \}_{i \in Adj(j) \cap V_2}$ to all clients $j \in V_2$ \;
	}
	\;
	
	\SetKwProg{Fn}{Step 2. Masked Input Collection}{}{}
	\Fn{}{
		\textbf{Client $i$:} \;
		\hspace{1mm} Computes $s_{i,j}=f(s_j^{PK},s_i^{SK})$ and \;
		\hspace{5mm} $\tilde{\theta}_i= \theta_i + \textbf{PRG}(b_i)+\sum_{j \in V_2 \cap Adj(i) ; i<j} \textbf{PRG}(s_{i,j}) - \sum_{j \in V_2 \cap Adj(i); i>j} \textbf{PRG} (s_{i,j})$\;
		\hspace{1mm} Sends $(i,\tilde{\theta}_i)$ to the server \;
		\textbf{Server:} \;
		\hspace{1mm} Collects the messages from clients (denote this set of clients as $V_3$) \;
		\hspace{1mm} Sends $V_3$ to all clients $j$ in $V_3$ \;
	}
	\;
	
	\SetKwProg{Fn}{Step 3. Unmasking}{}{}
	\Fn{}{
		\textbf{Client $i$:} \;
		\hspace{1mm} Decrypts $\bar{b}_{i,j}$ with key $f(c_j^{PK},c_i^{SK})$ to obtain $b_{i,j}$ for all $j\in Adj(i)\cap V_3$\;
		\hspace{1mm} Decrypts $\bar{s}_{i,j}^{SK}$ with key $f(c_j^{PK},c_i^{SK})$ to obtain $s_{i,j}^{SK}$ for all $j\in Adj(i)\cap(V_2 \backslash V_3)$  \; 
		\hspace{1mm} Sends $\{b_{i,j} \}_{j\in Adj(i)\cap V_3}$, $ \{s_{i,j}^{SK} \}_{j\in Adj(i)\cap(V_2 \backslash V_3)}$ to the server\;
		\textbf{Server:} \;
		\hspace{1mm} Collects the messages from clients \\
		\hspace{1mm} Reconstructs $b_i$ from $\{b_{i,j} \}_{j\in Adj(i)\cap V_3}$ for all $i \in V_3$ \;
		\hspace{1mm} Reconstructs $s_i^{SK}$ from $ \{s_{i,j}^{SK} \}_{j\in Adj(i)\cap(V_2 \backslash V_3)}$ for all $i \in V_2 \backslash V_3$ \;
		\hspace{1mm} Computes $s_{i,j}=f(s_j^{PK},s_i^{SK})$ for all $j\in Adj(i)\cap V_3$ \;
		\hspace{1mm} Computes the aggregated sum of local models $\sum_{i\in V_3} \theta_i =\sum_{i\in V_3} \tilde{\theta}_i - \sum_{i\in V_3} \textbf{PRG}(b_i) - \sum_{i\in V_2 \backslash V_3, j\in Adj(i)\cap V_3; i>j} \textbf{PRG}(s_{i,j})$
		\hspace{4mm} $+\sum_{i\in V_2 \backslash V_3, j\in Adj(i)\cap V_3; i<j} \textbf{PRG}(s_{i,j})$ \;
	}
\end{algorithm}

Now, using the assignment graph notation, we formally provide the suggested algorithm in Algorithm \ref{CESAdescription}.
Here we summarize the differences compared to SA. 
In \textbf{Step 0}, instead of broadcasting the public keys $(c_j^{PK},s_j^{PK})$ for client $j$ to all other clients, the server sends the public keys only to the client $i$ satisfying $j \in Adj(i) \cap V_1$.
In \textbf{Step 1}, 
each client $i \in V_1$ uses $t_i$-out-of-$(|Adj(i)|+1)$ secret sharing scheme to generate shares of $s_i^{SK}$ and $b_i$, \ie,
$s_i^{SK} \xrightarrow{(t_i,|Adj(i)|+1)} (s_{i,j}^{SK})_{j\in Adj(i)\cup \{i\}}$ and $b_i \xrightarrow{(t_i,|Adj(i)|+1)} (b_{i,j})_{j\in Adj(i) \cup \{i\}}$, 
and sends the encrypted $s_{i,j}^{SK}$ and $b_{i,j}$ to client $j$ through the server. 
In \textbf{Step 2}, client $i$ computes the masked private model 
\begin{equation} \label{maskedinput2}
\small{\tilde{\theta}_i = \theta_i + \textbf{PRG} (b_i) + \sum_{j \in V_2 \cap Adj(i) ; i<j} \textbf{PRG}(s_{i,j}) - \sum_{j \in V_2 \cap Adj(i); i>j} \textbf{PRG} (s_{i,j}),}
\end{equation}
and transmits $\tilde{\theta}_i$ to the server. 
In \textbf{Step 3}, 
client $i$ sends $b_{j,i}$ to the server for all $j\in V_3 \cap Adj(i)$, and sends $s_{j,i}^{SK}$ to the server for all $j\in (V_2 \backslash V_3) \cap Adj(i)$.
After reconstructing secrets from shares, the server obtains the sum of the local models $\theta_i$ as
\begin{equation}\label{Eqn:sum_of_unmasked}
\small{\sum_{i\in V_3} \theta_i =\sum_{i\in V_3} \tilde{\theta}_i - \sum_{i\in V_3} \textbf{PRG}(b_i) - \hspace{-5mm} \sum_{i\in V_2 \backslash V_3, j\in Adj(i)\cap V_3; i>j} \hspace{-10mm} \textbf{PRG}(s_{i,j}) + \hspace{-5mm} \sum_{i\in V_2 \backslash V_3, j\in Adj(i)\cap V_3; i<j}\hspace{-10mm} \textbf{PRG}(s_{i,j}).}
\end{equation}
Note that the suggested protocol with $n$-complete assignment graph $G$ reduces to SA.

Here we define several notations representing the evolution of the assignment graph $G$ as some of the nodes may drop out of the system in each step.
Recall that $V_0 = V$ and $V_{i+1}$ is defined as the set of survived nodes in Step $i \in \{0,\cdots, 3\}$. Let us define $G_i$ as the induced subgraph of $G$ whose vertex set is $V_i$, i.e., $G_i := G - (V \backslash V_i)$.
Then, $G_{i+1}$ represents how the non-failed clients are connected in Step $i$.
We define the evolution of assignment graph during the protocol as $\bm{G} = (G_0, G_1, \cdots, G_4)$.

In Fig.~\ref{Fig:example}, we illustrate an example of the suggested algorithm with $n=5$ clients. Fig.~\ref{Fig:graph_SA} corresponds to SA~\citep{bonawitz2017practical}, while Fig.~\ref{Fig:graph_CESA} depicts the proposed scheme.
Here, we focus on the required communication/computational resources of client 1.
Note that each client exchanges public keys and secret shares with its adjacent clients.
For example, client 1 exchanges the data with four other clients in the conventional scheme, while client 1 exchanges the data with clients 3 and 5 in the suggested scheme.
Thus, the proposed CCESA requires only half of the bandwidth compared to the conventional scheme.
In addition, CCESA requires less computational resources than conventional scheme, since each client generates less secret shares and pseudo-random values, and performs less key agreements.

\vspace{-2mm}
\section{Theoretical analysis}\label{Sec:Theory}
\vspace{-2mm}

\subsection{Performance metrics}\label{Sec:Performance_metrics}

The proposed CCESA algorithm aims at developing private, reliable and resource-efficient solutions for federated learning. Here, we define key performance metrics
for federated learning systems including federated averaging~\citep{mcmahan2017communication}, SA~\citep{bonawitz2017practical} and CCESA.
Recall that the server receives (masked) model parameters $\tilde{\theta}_i$ from clients $i \in V_3$, and wants to update the global model as the sum of the unmasked model parameters, i.e., $\theta_{\text{global}} \leftarrow \sum_{i \in V_3} \theta_i$. The condition for successful (or reliable) global model update is stated as follows.\begin{definition}\label{Def:reliable}
	A system is called reliable if the server successfully obtains the sum of the model parameters $\sum_{i\in V_3} \theta_i$ aggregated over the distributed clients.
\end{definition}

Now, we define the notion of privacy in the federated learning. We assume that the server and the clients participating in the system are trusted, and consider the threat of an eavesdropper who can access any information (namely, public keys of clients, secret shares, masked local models and the indices of surviving clients $V_3$) transmitted between clients and the server during the execution of CCESA algorithm.

Once an eavesdropper gets the local model $\theta_i$ of client $i$, it can reconstruct the private data (\eg, face image or medical records) of the client or can identify the client which contains the target data. In general, if an eavesdropper obtains the sum of local models occupied by a subset $\mathcal{T}$ of clients, a similar privacy attack is possible for the subset of clients. Thus, to preserve the data privacy, it is safe to protect the information on the partial sum of model parameters against the eavesdropper; we formalize this.
\begin{definition}\label{Def:privacy}
	A system is called private if $I(\sum_{i\in\mathcal{T}}\theta_i; E) = 0$, i.e., $H(\sum_{i\in\mathcal{T}}\theta_i) = H(\sum_{i\in\mathcal{T}}\theta_i|E)$ holds for all $\mathcal{T}$ satisfying $ \mathcal{T} \subset V_3$ and $ \mathcal{T} \notin \{\varnothing, V_3\}$. Here, $H$ is the entropy function, $I(X;Y)$ is the mutual information between $X$ and $Y$, and $E$ is the information accessible to the eavesdropper.
\end{definition}

When both reliability and privacy conditions hold, the server successfully updates the global model, while an eavesdropper cannot extract data in the information-theoretic sense. We define $P_e^{(r)}$ and $P_e^{(p)}$ as the probabilities that reliability and privacy conditions do not hold, respectively.

\subsection{Results for general assignment graph $G$}
\vspace{-2mm}
Recall that our proposed scheme is specified by the assignment graph $G$.
We here provide mathematical analysis on the performance metrics of reliability and privacy, in terms of the graph $G$. 
To be specific, the theorems below provide the necessary and sufficient conditions on the assignment graph $G$ to enable reliable/private federated learning, where the reliability and privacy are defined in Definitions~\ref{Def:reliable} and~\ref{Def:privacy}.
Before going into the details, we first define  \textit{informative} nodes as below.
\begin{definition}
	A node $i\in V_0$ is called informative when $|(Adj(i) \cup \{i\}) \cap V_4 | \geq t_i$ holds.
\end{definition}
Note that node $i$ is called informative when the server can reconstruct the secrets ($b_i$ or $s_{i}^{SK}$) of node $i$ in Step 3 of the algorithm.
	Using this definition, we state the condition on graph $G$ for enabling reliable systems as below.

\begin{theorem}
	\label{Theorem2}
	The system is reliable if and only if node $i$ in informative  for all $i\in V_3^+$, where
	$V_3^+= V_3 \cup \{i \in V_2 : Adj(i) \cap V_3 \neq \varnothing \}$ is the union of $V_3$ and the neighborhoods of $V_3$ within $V_2$.
\end{theorem}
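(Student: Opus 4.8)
The plan is to prove the equivalence by first identifying exactly which quantities the server must recover in order to evaluate the reconstruction identity~\eqref{Eqn:sum_of_unmasked}, and then checking that this list of quantities is indexed precisely by $V_3^+$. Since the server already holds $\{\tilde\theta_i\}_{i\in V_3}$ from Step~2, inspection of~\eqref{Eqn:sum_of_unmasked} shows that the only missing ingredients are the self-masks $\textbf{PRG}(b_i)$ for $i\in V_3$ and the cross-masks $\textbf{PRG}(s_{i,j})$ for $i\in V_2\setminus V_3$ with $j\in Adj(i)\cap V_3$. Recovering the former requires reconstructing $b_i$ for each $i\in V_3$; recovering the latter requires reconstructing $s_i^{SK}$ (whence $s_{i,j}=f(s_j^{PK},s_i^{SK})$) for each $i\in V_2\setminus V_3$ having at least one neighbor in $V_3$. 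The union of these two index sets is exactly $V_3 \cup \{i\in V_2:Adj(i)\cap V_3\neq\varnothing\} = V_3^+$.

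For sufficiency I would first do the mask bookkeeping. Summing the definition~\eqref{maskedinput2} of $\tilde\theta_i$ over $i\in V_3$, every cross-mask $\textbf{PRG}(s_{i,j})$ whose two endpoints both lie in $V_3$ cancels, since it enters $\tilde\theta_i$ and $\tilde\theta_j$ with opposite signs (the convention keyed on whether $i<j$ or $i>j$, together with $s_{i,j}=s_{j,i}$); what survives is precisely the self-masks of $V_3$ and the cross-masks straddling $V_3$ and $V_2\setminus V_3$, i.e. the residual terms of~\eqref{Eqn:sum_of_unmasked}. Because node $i$ informative means $|(Adj(i)\cup\{i\})\cap V_4|\ge t_i$, the server has at least $t_i$ of the $t_i$-out-of-$(|Adj(i)|+1)$ shares collected in Step~3, so the threshold property of Shamir sharing lets it reconstruct $b_i$ and $s_i^{SK}$ for every node of $V_3^+$. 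Substituting these into~\eqref{Eqn:sum_of_unmasked} then yields $\sum_{i\in V_3}\theta_i$, so the system is reliable.

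For necessity I would argue contrapositively: suppose some $i\in V_3^+$ fails to be informative, so the server holds strictly fewer than $t_i$ shares of that node's secret. The information-theoretic security of Shamir sharing then makes the secret ($b_i$ or $s_i^{SK}$) independent of the server's entire view, hence so is the corresponding $\textbf{PRG}$ output. The key structural fact is that this output has no cancelling partner in the aggregate: a self-mask $\textbf{PRG}(b_i)$ appears only in $\tilde\theta_i$, while a straddling cross-mask $\textbf{PRG}(s_{i,j})$ appears only in $\tilde\theta_j$ with $j\in V_3$, $i\notin V_3$, and cannot be recovered from $j$'s side either because $s_j^{SK}$ is never reconstructed for $j\in V_3$. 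Consequently, for each admissible value of the hidden secret there is a local input consistent with the server's view, and distinct values yield distinct candidate partial sums; the true $\sum_{i\in V_3}\theta_i$ is therefore not determined, and reliability fails.

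I expect the necessity direction to be the main obstacle, because it must preclude \emph{any} inference of the sum rather than merely the failure of formula~\eqref{Eqn:sum_of_unmasked}. The crux is to convert ``fewer than $t_i$ shares'' into genuine independence of the uncancelled mask from the server's view using the secret-sharing and key-agreement guarantees, and then to conclude a strictly positive conditional entropy $H(\sum_{i\in V_3}\theta_i\mid E)>0$. I would also take care with the cancellation bookkeeping to confirm that cross-masks with both endpoints in $V_3$ are the \emph{only} terms that vanish, so that the set of uncancelled secrets coincides exactly with $V_3^+$ and no node is spuriously required or omitted.
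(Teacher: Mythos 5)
Your proposal is correct and follows essentially the same route as the paper's proof: the same cancellation bookkeeping showing that the residual masks in $\sum_{i\in V_3}\tilde{\theta}_i$ are indexed exactly by $V_3^+$ (self-masks $\textbf{PRG}(b_i)$ for $i\in V_3$, straddling cross-masks requiring $s_i^{SK}$ for $i\in V_3^+\setminus V_3$), sufficiency via the Shamir threshold property, and necessity by contrapositive using the fact that a non-informative node leaves an uncancellable mask since $s_j^{SK}$ is never reconstructed for $j\in V_3$. The only difference is one of rigor, in your favor: you make explicit the information-theoretic step (independence of the hidden secret, hence of its \textbf{PRG} output, from the server's entire view, giving $H(\sum_{i\in V_3}\theta_i\mid E)>0$), whereas the paper's converse argues informally that the server ``cannot subtract'' the mask and hence ``cannot obtain'' the sum.
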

\begin{proof}
	The full proof is given in Appendix \ref{Pf:Theorem2}
	; here we provide a sketch for the proof. Recall that the server receives the sum of masked models $\sum_{i \in V_3} \tilde{\theta}_i $, while the system is said to be reliable if the server obtains the sum of unmasked models $\sum_{i \in V_3} \theta_i $. 
	Thus, the reliability condition holds if and only if the server can cancel out the random terms in (\ref{Eqn:sum_of_unmasked}), which is possible when either $s_{i}^{SK}$ or $b_i$ is recovered for all $i \in V_3^+$. Since a secret is recovered if and only if at least $t_i$ shares are gathered from adjacent nodes, we need $|(Adj(i) \cup \{i\}) \cap V_4 | \geq t_i$, which completes the proof.
\end{proof}

Now, before moving on to the next theorem, we define some sets of graph evolutions as below:
\begin{align*}
 \mathcal{G}_{\text{C}} \ &  = \{ \bm{G} = (G_0, G_1, \cdots, G_4) : G_3 \text{ is connected }   \},  \\
 \mathcal{G}_{\text{D}} \ & = \{ \bm{G} = (G_0, G_1, \cdots, G_4) : G_3 \text{ is not connected }   \}, \\
\mathcal{G}_{\text{NI }} \ & = \{ \bm{G}  \in \mathcal{G}_{\text{D}}  : \forall l \in [\kappa], \exists i \in C_l^+ \text{ such that node } i \text{ is not informative}   \}.
\end{align*}
Here, when $G_3$ is a disconnected graph with $\kappa \geq 2$ components, $C_l$ is defined as the vertex set of the $l^{\text{th}}$ component, and $C_l^+ := C_l \cup \{i \in V_2 : Adj(i) \cap C_l \neq \varnothing \} $.
	Using this definition, we state a sufficient condition on the assignment graph to enable private federated learning.
\begin{lemma}
	\label{Lemma:privacy}
	The system is private if $\bm{G} \in \mathcal{G}_C$.
\end{lemma}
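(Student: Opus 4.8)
The plan is to reduce the eavesdropper's view to a pairwise-masking system supported on the edges of $G_3$, and then show that connectivity of $G_3$ forces the induced vertex-masks to be uniform over the zero-sum subspace, so that the only function of $(\theta_i)_{i\in V_3}$ revealed is the total $\sum_{i\in V_3}\theta_i$.

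First I would isolate which parts of $E$ carry information about the models. The public keys are independent of the $\theta_i$, and under the security of the symmetric authenticated encryption the Step~1 ciphertexts $\bar b_{i,j},\bar s_{i,j}^{SK}$ reveal nothing about their plaintexts to an outside eavesdropper; hence the only informative components of $E$ are the masked models $\{\tilde\theta_i\}_{i\in V_3}$ together with the plaintext shares broadcast in Step~3. Granting the eavesdropper maximal power, from the Step~3 shares it can at best reconstruct the self-masks $b_i$ for $i\in V_3$ and the keys $s_i^{SK}$ for $i\in V_2\setminus V_3$; under the Diffie--Hellman key-agreement assumption this lets it recompute a pairwise secret $s_{i,j}$ if and only if at least one endpoint lies in $V_2\setminus V_3$. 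Consequently, after subtracting every recoverable mask from each $\tilde\theta_i$, the informative view is equivalent to the collection $\{\theta_i+\mu_i\}_{i\in V_3}$, where $\mu_i=\sum_{j\in Adj(i)\cap V_3;\, i<j}\textbf{PRG}(s_{i,j})-\sum_{j\in Adj(i)\cap V_3;\, i>j}\textbf{PRG}(s_{i,j})$ collects precisely the unrecoverable masks --- those on edges of $G_3$, both of whose endpoints lie in $V_3$. Treating the PRG outputs on distinct edges as independent and uniform over the (abelian) model space, the task becomes to analyze the vertex-mask vector $(\mu_i)_{i\in V_3}$ as a linear image of the independent edge-masks.

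The heart of the argument is the following incidence-map fact: assigning to each edge $e=\{i,j\}\in E(G_3)$ an independent uniform mask $m_e$ (added at the lower-indexed endpoint, subtracted at the higher) and forming $\mu_i=\sum_{e\ni i}\pm m_e$ always yields $\sum_{i\in V_3}\mu_i=0$, and when $G_3$ is connected the induced distribution of $(\mu_i)_{i\in V_3}$ is exactly uniform over the whole zero-sum subspace $\{v:\sum_{i\in V_3}v_i=0\}$. I would prove this by a spanning-tree argument: the $|V_3|-1$ tree edges already realize any prescribed zero-sum vertex vector, while extra cycle edges only inject randomness within the kernel, so surjectivity onto the zero-sum subspace combined with uniformity of the $m_e$ gives uniformity of $(\mu_i)$. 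Connectivity is exactly what guarantees this surjectivity, so this is the step where the hypothesis $\bm G\in\mathcal G_C$ is genuinely used, and it is the step I expect to be the main obstacle --- partly because I must respect that the eavesdropper sees the individual masked models, not merely their partial sums, which is why a naive one-time-pad argument on a single cut edge is insufficient (each edge-mask appears in two masked models with opposite signs, so its cancellation must be ruled out globally rather than edge-by-edge).

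Finally I would conclude. Since $(\mu_i)_{i\in V_3}$ is uniform over the zero-sum subspace and independent of $(\theta_i)_{i\in V_3}$, the view $\{\theta_i+\mu_i\}_{i\in V_3}$ reveals about $(\theta_i)$ only the quantity invariant under adding any zero-sum vector, namely the total $\sum_{i\in V_3}\theta_i$; every functional transverse to the all-ones direction is perfectly masked. For any $\mathcal T$ with $\varnothing\neq\mathcal T\subsetneq V_3$ the partial sum $\sum_{i\in\mathcal T}\theta_i$ is such a transverse functional (it is not a scalar multiple of $\sum_{i\in V_3}\theta_i$), hence statistically independent of $E$, giving $I(\sum_{i\in\mathcal T}\theta_i;E)=0$ and thus $H(\sum_{i\in\mathcal T}\theta_i)=H(\sum_{i\in\mathcal T}\theta_i\mid E)$. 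As this holds for every admissible $\mathcal T$, the system is private in the sense of Definition~\ref{Def:privacy}. In cut language the same fact reads: connectivity ensures every such $\mathcal T$ has a nonempty edge boundary in $G_3$, and it is precisely these boundary masks, non-cancelling and unrecoverable, that hide the partial sum.
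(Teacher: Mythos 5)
Your proof is correct (under the same idealizations the paper itself relies on: \textbf{PRG} outputs treated as independent and uniform, encryption and key agreement treated as perfect), but it takes a genuinely different---and in fact stronger---route than the paper's. The paper's proof of Lemma~\ref{Lemma:privacy} restricts attention to the single aggregate $\sum_{i\in\mathcal{T}}\tilde{\theta}_i$: after cancelling intra-$\mathcal{T}$ masks, it exhibits one cut edge $\{p,q\}$ with $p\in\mathcal{T}$, $q\in V_3\setminus\mathcal{T}$ whose mask $\textbf{PRG}(s_{p,q})$ survives with coefficient $\pm 1$ and is unrecoverable (the server only ever requests shares of $b_p, b_q$, never of $s_p^{SK}, s_q^{SK}$), so this single term acts as a one-time pad and $H\bigl(\sum_{i\in\mathcal{T}}\theta_i \mid \sum_{i\in\mathcal{T}}\tilde{\theta}_i\bigr)=H\bigl(\sum_{i\in\mathcal{T}}\theta_i\bigr)$. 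Note that this conditions on a \emph{function} of the eavesdropper's view rather than the view itself; the objection you raise---that the adversary of Definition~\ref{Def:privacy} sees the individual $\tilde{\theta}_i$, in which each cut-edge mask appears twice with opposite signs---is precisely the gap in that argument. Your approach closes it: subtracting everything recoverable from Step 3 reduces the view to $\{\theta_i+\mu_i\}_{i\in V_3}$ with $\mu_i$ supported on edges of $G_3$, and the spanning-tree argument shows the signed incidence map is surjective onto the zero-sum subspace, so $(\mu_i)_{i\in V_3}$ is uniform there and the entire view depends on the models only through $\sum_{i\in V_3}\theta_i$. This is essentially the joint-distribution security lemma of \citep{bonawitz2017practical}, and connectivity enters at the analogous point in both proofs (existence of a cut edge for the paper, surjectivity of the incidence map for you); what your version buys is security against the full view demanded by Definition~\ref{Def:privacy}, at the cost of a longer argument. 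One caveat on your final step: the inference ``transverse functional, hence statistically independent of $E$'' additionally requires the partial sum $\sum_{i\in\mathcal{T}}\theta_i$ to be independent of the total $\sum_{i\in V_3}\theta_i$ under the prior on the models (e.g., independent models whose complement-sum is uniform); since the eavesdropper necessarily learns the total whenever the protocol is reliable, some such hypothesis is unavoidable for Definition~\ref{Def:privacy} to be satisfiable at all, so this is a looseness of the definition that both proofs inherit rather than an error specific to yours.
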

\vspace{-4mm}
\begin{proof}
	Again we just provide a sketch of the proof here; the full proof is in Appendix \ref{Pf:Lemma:privacy}. Note that $G_3$ is the induced subgraph of $G$ whose vertex set is $V_3$. Suppose an eavesdropper has access to the masked local models $\{\tilde{\theta}_i \}_{i \in \mathcal{T}}$ of a subset $\mathcal{T} \subset V_3$ of nodes. Now, the question is whether this eavesdropper can recover the sum of the unmasked models $\sum_{i \in \mathcal{T}} \theta_i $. If $G_3$ is connected, there exists an edge $e = \{p,q\}$ such that $p \in \mathcal{T}$ and $q \in V_3 \backslash \mathcal{T}$.
	Note that $\sum_{i \in \mathcal{T}} \tilde{\theta}_i$ contains the $\textbf{PRG}(s_{p,q})$ term, while $s_{p,q}$ is not accessible by the eavesdropper since $p,q \in V_3$. Thus, from (\ref{Eqn:sum_of_unmasked}), the eavesdropper cannot obtain $\sum_{i \in \mathcal{T}} \theta_i $, which completes the proof.
\end{proof}

Based on the Lemma above, we state the necessary and sufficient condition for private system as below, the proof of which is given in Appendix \ref{pf:Thm:privacy}.
\begin{theorem}\label{Thm:privacy}
The system is private if and only if $\bm{G} \in \mathcal{G}_C \cup \mathcal{G}_{NI}$.
\end{theorem}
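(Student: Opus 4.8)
The plan is to prove both directions by reducing everything to the two mask-survival mechanisms already visible in the sketch of Lemma~\ref{Lemma:privacy}. Summing \eqref{maskedinput2} over a target set $\mathcal{T}\subset V_3$ cancels exactly the pairwise masks of edges internal to $\mathcal{T}$, so that $\sum_{i\in\mathcal{T}}\tilde\theta_i=\sum_{i\in\mathcal{T}}\theta_i+\sum_{i\in\mathcal{T}}\textbf{PRG}(b_i)+\sum\pm\textbf{PRG}(s_{i,j})$ where the last sum runs over boundary edges leaving $\mathcal{T}$. I will repeatedly use two facts: (i) a pairwise mask $\textbf{PRG}(s_{p,q})$ survives whenever exactly one of $p,q$ lies in $\mathcal{T}$, and it is information-theoretically hidden whenever neither $s_p^{SK}$ nor $s_q^{SK}$ is reconstructed; (ii) by security of the $t_i$-out-of-$(|Adj(i)|+1)$ secret sharing, fewer than $t_i$ shares reveal nothing about the seed, so any mask whose seed ($b_i$ or $s_i^{SK}$) cannot be reconstructed is uniform given the eavesdropper's view. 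Recall that the protocol reconstructs $b_i$ only for $i\in V_3$ and $s_i^{SK}$ only for $i\in V_2\setminus V_3$, and that both are reconstructible (to the eavesdropper, who observes the Step~3 shares) precisely when node $i$ is informative.

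For the ``if'' direction, I fix an admissible $\mathcal{T}\subset V_3$, $\mathcal{T}\notin\{\varnothing,V_3\}$, and assume $\bm{G}\in\mathcal{G}_C\cup\mathcal{G}_{NI}$. If $\bm{G}\in\mathcal{G}_C$ the claim is exactly Lemma~\ref{Lemma:privacy}. If $\bm{G}\in\mathcal{G}_{NI}$, I case-split on whether $\mathcal{T}$ respects the component partition of $G_3$. If it does not, some component is split by $\mathcal{T}$, and its connectivity yields an edge $\{p,q\}$ with $p\in\mathcal{T}$, $q\in V_3\setminus\mathcal{T}$; since $p,q\in V_3$ neither secret key is reconstructed, so $\textbf{PRG}(s_{p,q})$ survives and hides the sum, exactly as in Lemma~\ref{Lemma:privacy}. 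If instead $\mathcal{T}=\bigcup_{l\in S}C_l$ is a union of components, I pick any $l\in S$ and invoke the defining property of $\mathcal{G}_{NI}$ to obtain a non-informative $i^\ast\in C_l^+$. Either $i^\ast\in C_l\subseteq\mathcal{T}$, in which case its self-mask $\textbf{PRG}(b_{i^\ast})$ survives (self-masks never cancel) and is hidden by (ii); or $i^\ast\in C_l^+\setminus C_l$, which — because $C_l$ is a \emph{maximal} connected set in $G_3$ — must lie in $V_2\setminus V_3$ and be adjacent to some $p\in C_l$, so the boundary mask $\textbf{PRG}(s_{p,i^\ast})$ survives and is hidden since $s_{i^\ast}^{SK}$ is non-reconstructible. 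In every case $H(\sum_{i\in\mathcal{T}}\theta_i\mid E)=H(\sum_{i\in\mathcal{T}}\theta_i)$, giving privacy.

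For the ``only if'' direction I argue the contrapositive. If $\bm{G}\notin\mathcal{G}_C\cup\mathcal{G}_{NI}$, then $G_3$ is disconnected ($\kappa\ge2$) and there exists a component $C_l$ with \emph{every} node of $C_l^+$ informative. I take $\mathcal{T}=C_l$, which is admissible, and show the eavesdropper reconstructs $\sum_{i\in C_l}\theta_i$ exactly through \eqref{Eqn:sum_of_unmasked}: as $C_l$ is a full component there are no surviving within-$V_3$ pairwise masks; every self-seed $b_i$ ($i\in C_l$) is recoverable because $C_l\subseteq C_l^+$ is informative; and every dropped-neighbor seed $s_j^{SK}$ ($j\in C_l^+\cap(V_2\setminus V_3)$) is recoverable for the same reason, after which $s_{i,j}=f(s_i^{PK},s_j^{SK})$ follows from the public keys. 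Hence $I(\sum_{i\in C_l}\theta_i;E)>0$ and the system is not private.

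I expect the main obstacle to be the bookkeeping in the union-of-components case of the ``if'' direction: one must verify that the mask flagged by $i^\ast$ genuinely survives the cancellation in $\sum_{i\in\mathcal{T}}\tilde\theta_i$ (no opposite-sign term exists precisely because $i^\ast\notin\mathcal{T}$) and that its seed is independent of all remaining observables, so the masking is clean information-theoretic hiding rather than merely ``one unknown.'' The small structural step that localizes $i^\ast$ into $V_2\setminus V_3$ in the boundary case — using maximality of $C_l$ as a connected component of $G_3$ — is what makes the argument fit together, and I would isolate it explicitly before the main dichotomy.
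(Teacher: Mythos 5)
Your proof is correct and follows the same overall route as the paper's: the connected case is delegated to Lemma~\ref{Lemma:privacy}, the $\mathcal{G}_{NI}$ case is handled by exhibiting a surviving mask whose seed cannot be reconstructed (a self-mask $\textbf{PRG}(b_{i^\star})$ when the non-informative node lies inside the component, a boundary mask $\textbf{PRG}(s_{p,i^\star})$ when it lies in $C_l^+\setminus C_l$), and the converse takes $\mathcal{T}=C_l$ for a component whose closure $C_l^+$ consists entirely of informative nodes. Where you differ is in the fine case analysis, and your version is in fact tighter than the paper's at two points. First, in the forward direction the paper splits into ``$\mathcal{T}=C_l$ for some $l$'' versus ``$\mathcal{T}\neq C_l$ for all $l$,'' and in the latter case asserts the existence of an edge $\{p,q\}$ with $p\in\mathcal{T}$, $q\in V_3\setminus\mathcal{T}$; that assertion fails when $\mathcal{T}$ is a union of two or more (but not all) components, a case the paper's dichotomy silently mishandles and that your split --- ``$\mathcal{T}$ splits some component'' versus ``$\mathcal{T}$ is a union of components'' --- covers correctly by reusing the non-informative node of any single constituent component, including the step (which you rightly isolate) that maximality of $C_l$ forces $i^\star\in C_l^+\setminus C_l$ to lie in $V_2\setminus V_3$. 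Second, in the converse the paper's algebraic step, which reduces $\mathbf{z}$ to $\sum_{i\in\mathcal{T}}\sum_{j\in\mathcal{T}}(-1)^{\mathds{1}_{i>j}}\textbf{PRG}(s_{i,j})=0$ ``since $\mathcal{T}$ is a component,'' quietly discards the masks shared with dropped neighbors $j\in C_l^+\cap(V_2\setminus V_3)$; you instead reconstruct each such $s_j^{SK}$ from the Step~3 shares (possible precisely because every node of $C_l^+$ is informative) and recompute $s_{i,j}=f(s_i^{PK},s_j^{SK})$ from the public keys, which is exactly the argument the definition of $C_l^+$ exists to enable. Both repairs are needed for a fully rigorous proof, so your write-up should be read as a strengthening of the paper's argument rather than a deviation from it.
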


\vspace{-1mm}
The theorems above provide guidelines on how to construct the assignment graph $G$ to enable reliable and private federated learning. These guidelines can be further specified when we use the Erd\H{o}s-R\'enyi graph as the assignment graph $G$. In the next section, we explore how the Erd\H{o}s-R\'enyi graph can be used for reliable and private federated learning.

\subsection{Results for Erd\H{o}s-R\'enyi assignment graph $G$}
\vspace{-2mm}
The Erd\H{o}s-R\'enyi graph $G \in G(n,p)$ is a random graph of $n$ nodes where each edge connecting two arbitrary nodes is connected with probability $p$. Define CCESA($n,p$) as the proposed scheme using the assignment graph of $G \in G(n,p)$.
According to the analysis provided in this section, CCESA($n,p$) almost surely achieves both reliability and privacy conditions, provided that the connection probability $p$ is chosen appropriately. 
Throughout the analysis below, we assume that each client independently drops out with probability $q$ at each step (from Step 0 to Step 4), and the secret sharing parameter $t_i$ is set to $t$ for all $i\in [n]$.

\subsubsection{For asymptotically large $n$}
\vspace{-1mm}
We start with the analysis on CCESA($n,p$) when $n$ is asymptotically large.
The following two theorems provide  lower bounds on $p$ to satisfy reliability/privacy conditions.
The proofs are provided in Appendix \ref{pf:Thm:aasrel} and \ref{Pf:theorem3}.

\begin{theorem}
	\label{theorem5}
	CCESA($n,p$) is asymptotically almost surely reliable if 
	$p> \frac{3 \sqrt{(n-1) \log (n-1)}-1}{(n-1)(2(1-q)^4-1)}.$
\end{theorem}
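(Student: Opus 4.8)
The plan is to read the reliability requirement off Theorem~\ref{Theorem2} and then recast it as a statement about the Erd\H{o}s--R\'enyi degree sequence that can be attacked with a tail bound and a union bound. By Theorem~\ref{Theorem2}, the system fails to be reliable exactly when some node $i\in V_3^+$ is not informative, i.e. $|(Adj(i)\cup\{i\})\cap V_4|<t$. Because restricting attention to $V_3^+$ only removes nodes from consideration, I would first pass to the (weaker but cleaner) unconditional estimate $P_e^{(r)}\le \Pr[\exists\, i\in V_0:\ |(Adj(i)\cup\{i\})\cap V_4|<t]\le \sum_{i\in V_0}\Pr[\,|(Adj(i)\cup\{i\})\cap V_4|<t\,]$. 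This sidesteps any correlation between $V_3^+$-membership and the dropout pattern and costs only a constant factor in the union bound, which is harmless for an ``if'' (sufficiency) statement.

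Next I would fix a node $i$ and analyze $X_i:=|Adj(i)\cap V_4|$. Under the model the events $\{j\in Adj(i)\}$ (probability $p$) and $\{j\in V_4\}$ (probability $(1-q)^4$, since a node must survive the dropouts in Steps $0$ through $3$) are independent of each other and across $j\neq i$, so $X_i$ is a sum of $n-1$ independent $\mathrm{Bernoulli}\big(p(1-q)^4\big)$ terms with mean $\mu=(n-1)p(1-q)^4$. I would then use that the reconstruction threshold is set to (about) half the expected degree, $t\approx (n-1)p/2$, so that the gap between the expected number of surviving shareholders and the threshold is proportional to $(n-1)p\big(2(1-q)^4-1\big)$. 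This is precisely where the factor $2(1-q)^4-1$ enters, and it also explains the implicit sign constraint $(1-q)^4>1/2$: if too large a fraction of clients drops out, the \emph{expected} number of surviving shares already falls below the threshold and reliability becomes unachievable regardless of $p$.

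I would then control the lower tail with a Hoeffding-type bound on a sum of $n-1$ terms bounded in $[0,1]$, $\Pr[X_i<t]\le \exp\!\big(-c\,(\mu-t)^2/(n-1)\big)$, and take a union bound over the $n$ nodes. The bound $n\exp(-c(\mu-t)^2/(n-1))$ tends to zero once $(\mu-t)^2/(n-1)\gtrsim \log n$, i.e. once $(n-1)p\big(2(1-q)^4-1\big)\gtrsim \sqrt{(n-1)\log(n-1)}$; solving this inequality for $p$ reproduces the stated threshold. In this reading the numerical constant $3$ is a conservative choice that guarantees $nP_e^{(r)}\to 0$ (rather than the minimal $\sqrt{1/2}$ or $\sqrt{2}$ coming from raw Hoeffding), and the additive $-1$ absorbs the contribution of node $i$'s own share, which lets one require only $t-1$ surviving neighbours when $i\in V_4$.

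The step I expect to require the most care is not the concentration itself but matching the threshold $t$ to the dropout-discounted degree so that exactly the factor $2(1-q)^4-1$ emerges, together with justifying that the chosen tail bound yields the correct $\sqrt{(n-1)\log(n-1)}$ scaling. Concretely, applying Hoeffding across all $n-1$ \emph{potential} neighbours (treating each indicator as ranging over $[0,1]$) produces the additive deviation $\sqrt{(n-1)\log(n-1)}$, hence expected degree $\Theta(\sqrt{n\log n})$ and the advertised $O(\sqrt{n/\log n})$ savings; a multiplicative Chernoff bound over only the $\Theta((n-1)p)$ \emph{present} edges would instead suggest a smaller $\log n$-scale degree, so I would be careful to argue that it is this additive, absolute-deviation requirement (binding uniformly over all nodes via the union bound) that governs reliability. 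I would also verify that the reduction to ``every node in $V_0$ has at least $t$ surviving $V_4$-neighbours'' is valid for all finite $n$, so that the conservative constant secures asymptotic almost-sure reliability and not merely an expectation statement.
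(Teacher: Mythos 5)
Your overall route is the same as the paper's: reduce via Theorem~\ref{Theorem2} to the event that some node has fewer than $t$ surviving shareholders, enlarge the union from $V_3^+$ to all of $[n]$ (you are in fact more explicit than the paper about why this enlargement is legitimate despite $V_3^+$ being random), model $X_i = |Adj(i)\cap V_4|$ as a $B\big(n-1,\,p(1-q)^4\big)$ variable, apply Hoeffding with additive deviation $\sqrt{(n-1)\log(n-1)}$ so that each node fails with probability at most $1/(n-1)^2$, and union-bound to get $n/(n-1)^2 \to 0$.

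The genuine gap is in how you pin down $t$. You take $t \approx (n-1)p/2$, but the parameter-selection rule the theorem relies on (Appendix~\ref{tsetting}) is $t = \big\lceil \frac{(n-1)p+\sqrt{(n-1)\log(n-1)}+1}{2} \big\rceil$: half the expected degree \emph{plus} an extra $\frac{1}{2}\sqrt{(n-1)\log(n-1)}$ margin, which is imposed to defeat the unmasking attack (Proposition~\ref{proposition1}), not for reliability. This matters in two ways. First, reliability is monotone in $t$ (a larger $t$ requires more surviving shares), so an argument run with an underestimated $t$ proves reliability of a different, less demanding system; it does not yield Theorem~\ref{theorem5} for the system actually specified by the design rule. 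Second, your accounting of the constant $3$ is wrong: it is not Hoeffding conservatism. With the true $t$, the gap is $\mu - t \approx \frac{1}{2}\big[(n-1)p\big(2(1-q)^4-1\big) - \sqrt{(n-1)\log(n-1)}\big]$, and requiring this to exceed the Hoeffding deviation $\sqrt{(n-1)\log(n-1)}$ gives $(n-1)p\big(2(1-q)^4-1\big) \gtrsim 3\sqrt{(n-1)\log(n-1)}$; one of the three $\sqrt{(n-1)\log(n-1)}$ terms comes from the design rule itself and the other two from the tail bound. With your $t$ the same computation yields the constant $2$, so your plan does not in fact ``reproduce the stated threshold'' as claimed, and hand-waving the difference as a safety factor hides exactly the term that produces it. The fix is mechanical and is what the paper does: substitute the design-rule $t$ into the sufficient condition $t \leq (n-1)p(1-q)^4 - \sqrt{(n-1)\log(n-1)}$ and rearrange in $p$ (the stray $\pm 1$'s, which you attribute to node $i$'s own share, come out of this substitution and are asymptotically irrelevant).
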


\begin{theorem} 
	\label{theorem3}
	CCESA($n,p$) is asymptotically almost surely private if 
	$ p> \frac{ \log ( \lceil n(1-q)^3 - \sqrt{n \log n} \rceil)}{ \lceil n(1-q)^3 - \sqrt{n \log n} \rceil}.$ 
\end{theorem}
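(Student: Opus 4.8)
The plan is to prove Theorem~\ref{theorem3}, which asserts that the Erd\H{o}s-R\'enyi-based scheme CCESA$(n,p)$ is asymptotically almost surely private once $p$ exceeds the stated threshold. By Lemma~\ref{Lemma:privacy}, privacy is guaranteed whenever $\bm{G} \in \mathcal{G}_C$, i.e.\ whenever the induced subgraph $G_3$ on the surviving vertex set $V_3$ is connected. So it suffices to show that $G_3$ is connected asymptotically almost surely under the given bound on $p$; I would prove $P_e^{(p)} \le \Pr[G_3 \text{ is not connected}] \to 0$. This reduces the whole statement to a classical connectivity threshold question for a random graph, where the only twist is that $V_3$ is itself a random vertex set produced by the dropout process.

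First I would pin down the distribution of $G_3$. Since each client independently survives each of the steps with probability $1-q$, after three steps (Steps 0, 1, 2) the surviving set $V_3$ has each original node present independently with probability $(1-q)^3$. Conditioned on $|V_3| = N$, the induced subgraph $G_3$ is exactly an Erd\H{o}s-R\'enyi graph $G(N,p)$, because edges of $G$ were drawn independently with probability $p$ and independence is preserved under vertex deletion. The next step is to control $N = |V_3|$: by a Chernoff/concentration bound, $N$ concentrates around $n(1-q)^3$, and with high probability $N \ge m := \lceil n(1-q)^3 - \sqrt{n\log n}\,\rceil$, which is precisely the quantity appearing in the theorem's denominator and logarithm. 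I would make this a clean ``good event'' $\mathcal{E} = \{N \ge m\}$ with $\Pr[\mathcal{E}^c]\to 0$.

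Next I would invoke the sharp connectivity threshold for Erd\H{o}s-R\'enyi graphs: $G(N,p)$ is connected a.a.s.\ provided $p > \frac{\log N}{N}$ (more precisely $p = \frac{\log N + c_N}{N}$ with $c_N \to \infty$ suffices). The function $x \mapsto \frac{\log x}{x}$ is decreasing for $x \ge e$, so on the event $\mathcal{E}$ where $N \ge m$ we have $\frac{\log N}{N} \le \frac{\log m}{m}$. Hence the hypothesis $p > \frac{\log m}{m}$ of the theorem dominates the connectivity threshold for every admissible value of $N$, and conditioning on $N$ yields connectivity of $G_3$ a.a.s. Combining with Lemma~\ref{Lemma:privacy} and a union bound over the rare event $\mathcal{E}^c$ completes the argument.

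The main obstacle, and the part deserving the most care, is the interaction between the randomness of the vertex set and the randomness of the edges. The clean statement ``conditioned on $|V_3|=N$, $G_3 \sim G(N,p)$'' needs the dropout to be independent of the edge draws, and the monotonicity trick requires that I drive the whole probability estimate with the worst-case (smallest plausible) $N = m$ rather than the typical $N \approx n(1-q)^3$; getting the concentration slack $\sqrt{n\log n}$ to match the exact ceiling in the theorem statement is where the routine-but-delicate calculation lives. A subtler point is that the connectivity threshold must be applied uniformly over the range of $N$ values in $\mathcal{E}$, not just at a single point, which is why the monotonicity of $\frac{\log x}{x}$ is essential rather than incidental. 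Finally I would note that this gives a sufficient condition only (privacy via $\mathcal{G}_C$), consistent with the fact that Theorem~\ref{theorem3} states a lower bound on $p$ rather than an exact threshold, so I do not need the more elaborate $\mathcal{G}_{NI}$ half of the characterization in Theorem~\ref{Thm:privacy}.
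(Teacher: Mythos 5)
Your proposal is correct and takes essentially the same route as the paper's own proof: reduce privacy to connectivity of $G_3$ via Lemma~\ref{Lemma:privacy}, note that conditioned on $|V_3|$ the induced subgraph is Erd\H{o}s-R\'enyi, concentrate $|V_3|$ around $n(1-q)^3$ with slack $\sqrt{n\log n}$ via Hoeffding, and combine the classical connectivity threshold with the monotonicity of $\log x / x$ to cover the worst-case vertex count $\lceil n(1-q)^3 - \sqrt{n\log n}\rceil$. The only cosmetic difference is your one-sided good event $\{|V_3| \ge m\}$ in place of the paper's two-sided conditioning, which changes nothing substantive.
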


From these theorems, the condition for achieving both reliability and privacy is obtained as follows.
\begin{remark}\label{Rmk:p_star}
	Let 
	\begin{equation} \label{pstar}
	\small{ p^{\star} = \max \{\frac{ \log ( \lceil n(1-q)^3 - \sqrt{n \log n} \rceil)}{ \lceil n(1-q)^3 - \sqrt{n \log n} \rceil}, \frac{3 \sqrt{(n-1) \log (n-1)}-1}{(n-1)(2(1-q)^4-1)} \}.}
	\end{equation}
	If $p>p^{\star}$, then CCESA($n,p$) is asymptotically almost surely (a.a.s.) reliable and private. Note that the threshold connection probability $p^{\star}$ is a decreasing function of $n$. Thus, the proposed algorithm is getting more resource efficient than SA as $n$ grows, improving the scalability of the system.
\end{remark}

In the remarks below, we compare SA and the proposed CCESA, in terms of the required amount of communication/computational resources to achieve both reliability and privacy. These results are summarized in Table~\ref{Table:O_notation}.

\begin{remark}
	Let $B$ be the amount of additional communication bandwidth used at each client, compared to that of federated averaging~\citep{mcmahan2017communication}. Since the bandwidth is proportional to $np$, we have $B_{CCESA(n,p)} \sim O(\sqrt{n \log n})$ and $B_{SA} \sim O(n)$. Thus, the suggested CCESA protocol utilizes a much smaller bandwidth compared to SA
	in ~\citep{bonawitz2017practical}.
	The detailed comparison is given in Appendix \ref{Sec:Resources_comm}.
\end{remark}

\begin{remark}
	Compared to SA, the proposed CCESA algorithm generates a smaller number of secret shares and pseudo-random values, and performs less key agreements. Thus, the computational burden at the server and the clients reduces by a factor of at least $O(\sqrt{n / \log n})$. The detailed comparison is given in Appendix \ref{Sec:Resources_comp}.
\end{remark}

\subsubsection{For finite $n$}
\vspace{-1mm}
We now discuss the performance of the suggested scheme for finite $n$. Let $P_e^{(p)}$ be the error probability that CCESA($n,p$) does not satisfy the the privacy condition, and define $P_e^{(p)}$ as the error probability that CCESA($n,p$) is not reliable.
Below we provide upper bounds on $P_e^{(p)}$ and $P_e^{(r)}$. 
\begin{theorem} \label{theorem:per}
	\label{Finite_correctness}
	For arbitrary $n, p, q$ and $t$, the error probability for reliability $P_e^{(r)}$ is bounded by
	$   P_e^{(r)} \leq n e^{-(n-1)D_{KL}(\frac{t-1}{n-1}||p(1-q)^4)},$
	where $D_{KL}$ 
	is the Kullback-Leibler (KL) divergence.
\end{theorem}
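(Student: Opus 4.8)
The plan is to combine the exact reliability characterization of Theorem~\ref{Theorem2} with a union bound and a Chernoff-type tail estimate for the binomial distribution. By Theorem~\ref{Theorem2}, the system fails to be reliable precisely when some node $i \in V_3^+$ is not informative. Since $V_3^+ \subseteq V_0 = [n]$, the failure event is contained in the event that \emph{some} node in $[n]$ is not informative, so a union bound gives
\begin{equation*}
P_e^{(r)} \leq \sum_{i=1}^{n} \Pr[\,i \text{ is not informative}\,].
\end{equation*}
It therefore suffices to bound the single-node probability $\Pr[\,|(Adj(i)\cup\{i\})\cap V_4| \leq t-1\,]$ uniformly in $i$.

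First I would pin down the correct independent structure. Fix a node $i$ and consider the other $n-1$ nodes. A node $j \neq i$ contributes a usable share to $i$ exactly when the edge $\{i,j\}$ is present (probability $p$ in $G(n,p)$) \emph{and} $j$ survives every dropout round up to $V_4$, i.e. survives Steps $0,1,2,3$ (probability $(1-q)^4$ under independent per-step dropout). Since the edges of the Erd\H{o}s-R\'enyi graph are mutually independent and the dropout events are independent across nodes and independent of the graph, the count $Z := |Adj(i)\cap V_4 \setminus\{i\}|$ is distributed as $\mathrm{Bin}(n-1,\,r)$ with $r := p(1-q)^4$. Writing $|(Adj(i)\cup\{i\})\cap V_4| = Z + \mathbf{1}\{i \in V_4\} \geq Z$, non-informativeness of $i$ forces $Z \leq t-1$, hence $\Pr[\,i \text{ not informative}\,] \leq \Pr[Z \leq t-1]$.

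The remaining step is a lower-tail Chernoff bound in Kullback--Leibler form. In the relevant regime $\tfrac{t-1}{n-1} \leq r = p(1-q)^4$, the standard binomial tail inequality yields
\begin{equation*}
\Pr[Z \leq t-1] = \Pr\!\left[Z \leq (n-1)\tfrac{t-1}{n-1}\right] \leq \exp\!\left(-(n-1)\,D_{KL}\!\left(\tfrac{t-1}{n-1}\,\big\|\,p(1-q)^4\right)\right).
\end{equation*}
Substituting this into the union bound gives exactly the claimed inequality $P_e^{(r)} \leq n\,e^{-(n-1)D_{KL}(\frac{t-1}{n-1}\|p(1-q)^4)}$.

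I expect the main obstacle to be justifying the clean $\mathrm{Bin}(n-1, p(1-q)^4)$ description of the good-neighbor count: one must argue carefully that edge presence and the fourfold survival are independent and combine multiplicatively, and that reducing from $|(Adj(i)\cup\{i\})\cap V_4|$ to the neighbor-only count $Z$ (dropping the $\mathbf{1}\{i\in V_4\}$ term) is the correct one-sided relaxation that preserves the bound. A secondary point worth flagging is that the KL Chernoff estimate is a lower-tail statement valid when $\tfrac{t-1}{n-1} \leq p(1-q)^4$; outside this regime the stated bound is either vacuous or must be read with $D_{KL}$ interpreted accordingly, so the inequality carries content precisely in the parameter range where the threshold $t$ is not too large.
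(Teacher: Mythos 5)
Your proposal is correct and follows essentially the same route as the paper's proof: invoke Theorem~\ref{Theorem2} to reduce reliability failure to the existence of a non-informative node in $V_3^+$, enlarge the union bound to all $n$ nodes, lower-bound $|(Adj(i)\cup\{i\})\cap V_4|$ by the neighbor-only count $X_i \sim B(n-1, p(1-q)^4)$, and apply the KL-form Chernoff bound. If anything, you are slightly more careful than the paper, which writes the Chernoff step as an equality and omits the regime condition $\frac{t-1}{n-1} \leq p(1-q)^4$ that you correctly flag.
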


\begin{theorem} \label{theorem:pep}
	\label{Finite_security}
	For arbitrary $n, p$ and $q$, the error probability for privacy $P_e^{(p)}$ is bounded by  
	\begin{equation*}
	P_e^{(p)} \leq \sum_{m=0}^{n} {n\choose m} (1-q)^{3m} (1-(1-q)^3)^{(n-m)} \sum_{k=1}^{\lfloor m/2 \rfloor} {m \choose k}(1-p)^{k(m-k)}.
	\end{equation*}
\end{theorem}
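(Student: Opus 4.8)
The plan is to bound the privacy error by the disconnection probability of $G_3$ and then evaluate the latter under the Erd\H{o}s-R\'enyi model. By Lemma~\ref{Lemma:privacy}, the system is private whenever $\bm{G} \in \mathcal{G}_C$, i.e. whenever the induced subgraph $G_3$ on the surviving set $V_3$ is connected. Hence the failure event is contained in the complementary event, giving the crude but sufficient inclusion $\{\text{not private}\} \subseteq \{G_3 \text{ is disconnected}\}$, so that $P_e^{(p)} \leq \Pr[G_3 \text{ is disconnected}]$. (Using only the sufficient condition rather than the exact characterization of Theorem~\ref{Thm:privacy} yields a valid, if slightly loose, upper bound.) What remains is purely combinatorial: estimate the probability that a random induced subgraph of $G \in G(n,p)$ fails to be connected.

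First I would condition on the size of $V_3$. A node belongs to $V_3$ exactly when it survives the independent dropouts of Steps~0, 1 and 2, which happens with probability $(1-q)^3$, independently across nodes; therefore $|V_3|$ is binomially distributed and $\Pr[|V_3| = m] = \binom{n}{m}(1-q)^{3m}(1-(1-q)^3)^{n-m}$, which reproduces the outer summand. The key structural observation is that node dropout is generated independently of the edge set of $G$, so conditioned on any particular surviving set of size $m$, the induced subgraph $G_3$ is distributed exactly as an Erd\H{o}s-R\'enyi graph $G(m,p)$. By symmetry the conditional disconnection probability depends only on $m$, and the law of total probability then gives $P_e^{(p)} \leq \sum_{m=0}^{n} \binom{n}{m}(1-q)^{3m}(1-(1-q)^3)^{n-m}\,\Pr[G(m,p)\text{ disconnected}]$.

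It then remains to bound $\Pr[G(m,p)\text{ disconnected}]$ by the inner sum $\sum_{k=1}^{\lfloor m/2 \rfloor}\binom{m}{k}(1-p)^{k(m-k)}$ via a union (first-moment) bound. If $G(m,p)$ is disconnected, its smallest connected component has some size $k$ with $1 \leq k \leq \lfloor m/2 \rfloor$, and that component has no edges to its complement. Taking a union over all candidate components and noting that a fixed $k$-subset is isolated from the remaining $m-k$ vertices with probability $(1-p)^{k(m-k)}$ (there are $k(m-k)$ possible crossing edges, each absent independently with probability $1-p$), the union bound delivers precisely the stated inner sum. Substituting this estimate into the total-probability expression completes the argument.

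The individual steps are routine; the one place demanding care is the independence claim underlying the factorization, namely that conditioning on which nodes survive to $V_3$ leaves the edges of $G_3$ distributed as fresh $G(m,p)$ randomness. This relies on the modeling assumption, stated at the outset of the Erd\H{o}s-R\'enyi analysis, that dropouts are sampled independently of $G$ and independently across steps. A secondary subtlety is the restriction of the union-bound index to $k \leq \lfloor m/2 \rfloor$: this prevents double-counting each disconnection through both a component and its complement, and rests on the elementary fact that a disconnected graph always has a component of size at most half the vertex count.
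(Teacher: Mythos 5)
Your proposal is correct and follows essentially the same route as the paper's proof: bound $P_e^{(p)}$ by the disconnection probability of $G_3$ via Lemma~\ref{Lemma:privacy}, condition on $|V_3|=m$ being binomial with parameter $(1-q)^3$ so that $G_3$ is distributed as $G(m,p)$, and apply a union bound over candidate isolated vertex subsets of size $k \leq \lfloor m/2 \rfloor$. Your write-up is in fact slightly more explicit than the paper's on two points it leaves implicit, namely the independence of dropouts from the edge randomness and the role of the smallest component in justifying the $\lfloor m/2 \rfloor$ cutoff.
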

\vspace{-2mm}

Fig.~\ref{Fig:PepPer} illustrates the upper bounds on $P_e^{(p)}$ and $P_e^{(r)}$ obtained in Theorems~\ref{theorem:per} and~\ref{theorem:pep}, when $p = p^{\star}$.
Here, $q_{\text{total}}:=1-(1-q)^4$ is defined as the dropout probability of the entire protocol (from Step 0 to Step 4).
	Note that the upper bounds in Theorems~\ref{theorem:per} and~\ref{theorem:pep} are decreasing functions of $p$.
	Therefore, the plotted values in Fig.~\ref{Fig:PepPer} are indeed upper bounds on the error probabilities for arbitrary $p>p^{\star}$.
	It is shown that a system with the suggested algorithm is private and reliable with high probability for an arbitrary chosen $p>p^{\star}$. 
The error probability for the privacy $P_e^{(p)}$ is below $10^{-40}$, which is negligible even for small $n$. The error probability for the reliability $P_e^{(r)}$ is below $10^{-2}$, which means that in at most one round out of 100 federated learning rounds, the (masked) models $\{\tilde{\theta}_i \}_{i\in V_3}$ received by the server cannot be converted to the sum of (unmasked) local models $\sum_{i\in V_3} \theta_i$.
Even in this round when the server cannot obtain the sum of (unmasked) local models, the server is aware of the fact that the current round is not reliable, and may maintain the global model used in the previous round. This does not harm the accuracy of our scheme, as shown in the experimental results of Section~\ref{Sec:Exp}.

\begin{figure}[t]
	\vspace{-6mm}
	\centering
	\small
	\subfloat
	{\includegraphics[width=.49\linewidth ]{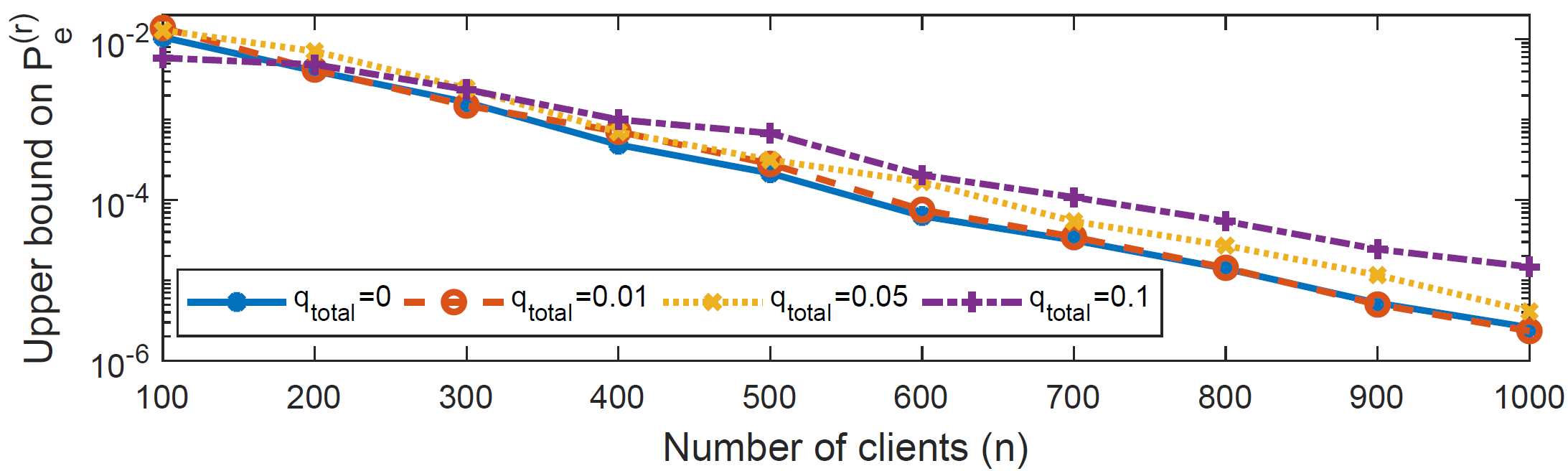}\label{Fig:Per}}
	\ \ 
	\vspace{-1mm}
	\subfloat
	{\includegraphics[width=.49\linewidth]{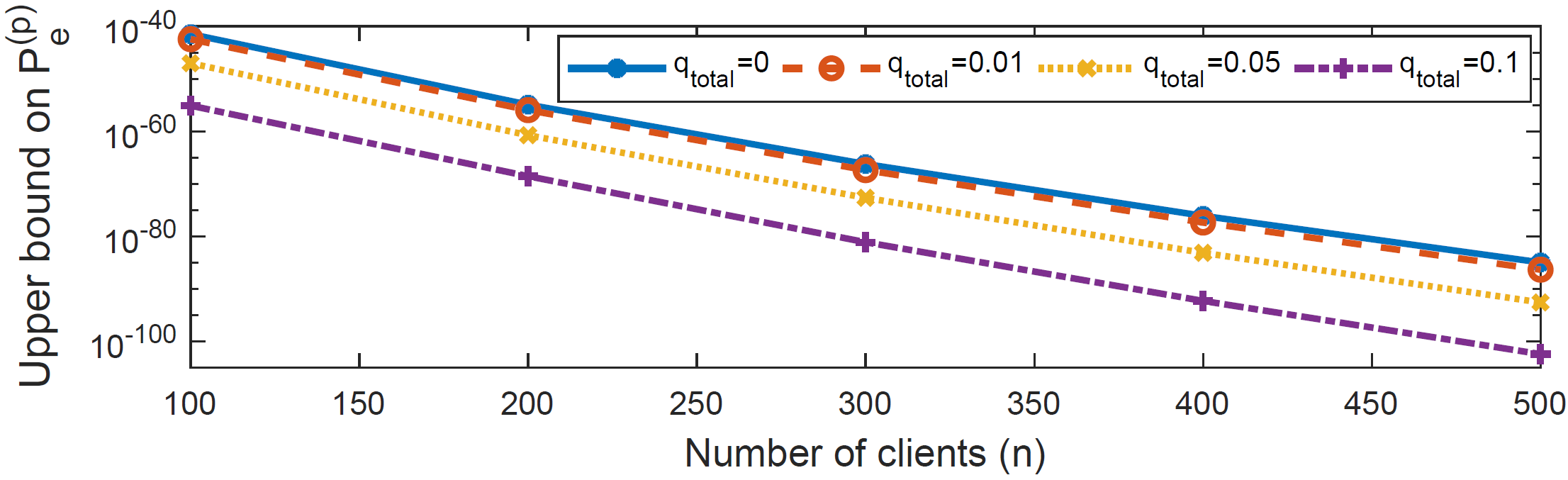}\label{Fig:Pep}}
	\caption{Upper bounds on the error probabilities $P_e^{(r)}$ and $P_e^{(p)}$ in Theorems~\ref{theorem:per} and~\ref{theorem:pep} for $p = p^{\star}$, where $p^{\star}$ is the threshold connection probability for achieving both reliability and privacy as in (\ref{pstar}). Note that for arbitrary $p > p^{\star}$, the error probabilities are lower than the upper bounds marked in the figure.	
			One can confirm that the suggested CCESA algorithm is private and reliable with a high probability, provided that $p > p^{\star}$.
	}
	\label{Fig:PepPer}
	\vspace{-1mm}
\end{figure}

\vspace{-2mm}
\section{Experiments}
\label{Sec:Exp}
\vspace{-2mm}
Here we provide experimental results on the proposed CCESA algorithm. We compare CCESA and secure aggregation (SA) of~\citep{bonawitz2017practical} in terms of time complexity (running time), reliability, and privacy. 
We tested both schemes on two real datasets, AT\&T Laboratories Cambridge database of faces ({https://www.kaggle.com/kasikrit/att-database-of-faces}) and CIFAR-10. 
For the AT\&T face dataset containing images of 40 individuals,
we considered a federated learning setup where each of $n=40$ clients uses its own images for local training.
All algorithms are implemented in python and PyTorch~\citep{pytorch}.
Codes will be made available to the public.

\vspace{-2mm}
\subsection{Running time}
\vspace{-2mm}
In Table~\ref{Table:time}, we tested the running time of our CCESA and existing SA for various $n$ and $q_{\text{total}}$. 
Similar to the setup used in~\citep{bonawitz2017practical}, we assumed that each node has its local model $\bm{\theta}$ with dimension $m=10000$, where each element of the model is chosen from the field $\mathbb{F}_{2^{16}}$. Here, $t$ is selected by following the guideline 
in Appendix \ref{tsetting}, and $p$ is chosen as $p^{\star} \sim O(\sqrt{\log n / n})$ defined in (\ref{pstar}) which is proven to meet both reliability and privacy conditions.
For every $n, q_{\text{total}}$ setup, the proposed CCESA($n,p$) requires $p$ times less running time compared with the conventional SA of~\citep{bonawitz2017practical}. 
This is because each client generates $p$ times less number of secret shares and pseudo-random values, and performs $p$ times less number of key agreements. This result is consistent with our analysis on the computational complexity in Table~\ref{Table:O_notation}.

\begin{table}[]
	\vspace{-2mm}
	\tiny
	\centering
	\begin{tabular}{|c|llll|rrrr|r|}
		\hline
		\multicolumn{1}{|l|}{} & \multicolumn{1}{c}{\multirow{2}{*}{$n$}} & \multicolumn{1}{c}{\multirow{2}{*}{$q_{\text{total}}$}} & \multicolumn{1}{c}{\multirow{2}{*}{$t$}} & \multicolumn{1}{c|}{\multirow{2}{*}{$p$}} & \multicolumn{4}{c|}{Client}                                                                                            & \multicolumn{1}{c|}{\multirow{2}{*}{Server}} \\ \cline{6-9}
		\multicolumn{1}{|l|}{} & \multicolumn{1}{c}{}                     & \multicolumn{1}{c}{}                             & \multicolumn{1}{c}{}                     & \multicolumn{1}{c|}{}                     & \multicolumn{1}{c}{Step 0} & \multicolumn{1}{c}{Step 1} & \multicolumn{1}{c}{Step 2} & \multicolumn{1}{c|}{Step 3} & \multicolumn{1}{c|}{}                        \\ \hline
		\multirow{6}{*}{SA}    & 100                                      & 0                                                & 51                                       & 1                                         & 6                           & 3572                        & 3537                        & 88                           & 13                                           \\
		& 100                                      & 0.1                                              & 51                                       & 1                                         & 6                           & 3540                        & 3365                        & 80                           & 9847                                         \\
		& 300                                      & 0                                                & 151                                      & 1                                         & 6                           & 11044                       & 10867                       & 269                          & 82                                           \\
		& 300                                      & 0.1                                              & 151                                      & 1                                         & 6                           & 10502                       & 10453                       & 255                          & 72847                                        \\
		& 500                                      & 0                                                & 251                                      & 1                                         & 6                           & 19097                       & 18196                       & 449                          & 198                                          \\
		& 500                                      & 0.1                                              & 251                                      & 1                                         & 6                           & 18107                       & 17315                       & 432                          & 329645                                       \\ \hline
		\multirow{6}{*}{CCESA}  & 100                                      & 0                                                & 43                                       & 0.6362                                    & 6                           & 2216                        & 2171                        & 56                           & 16                                           \\
		& 100                                      & 0.1                                              & 51                                       & 0.7953                                    & 6                           & 2715                        & 2648                        & 66                           & 8067                                         \\
		& 300                                      & 0                                                & 83                                       & 0.4109                                    & 6                           & 4435                        & 4354                        & 110                          & 54                                           \\
		& 300                                      & 0.1                                              & 98                                       & 0.5136                                    & 6                           & 5382                        & 5300                        & 113                          & 37082                                        \\
		& 500                                      & 0                                                & 112                                      & 0.3327                                    & 6                           & 5954                        & 5846                        & 145                          & 132                                          \\
		& 500                                      & 0.1                                              & 133                                      & 0.4159                                    & 6                           & 7634                        & 7403                        & 184                          & 141511                                       \\ \hline
	\end{tabular}
	\vspace{-2mm}
	\caption{Running time (unit: ms) of SA~\citep{bonawitz2017practical} and suggested CCESA  }
	\label{Table:time}
	\vspace{-5mm}
\end{table}
\vspace{-3mm}
\subsection{Reliability}
\vspace{-2mm}
Recall that a system is reliable if the server obtains the sum of the local models $\sum_{i \in V_3} \theta_i $.
Fig.~\ref{Fig:CIFAR} shows the reliability of CCESA in CIFAR-10 dataset. 
We plotted the test accuracies of SA and the suggested CCESA($n,p$) for various $p$. 
Here, we included the result when $p = p^{\star}=0.3106$, where $p^{\star}$ is the provably minimum connection probability for achieving both the reliability and privacy according to Remark~\ref{Rmk:p_star}.
One can confirm that CCESA with $p = p^{\star}$ achieves the performance of SA in both i.i.d. and non-i.i.d. data settings, coinciding with our theoretical result in Theorem~\ref{theorem5}. 
Moreover, in both settings, selecting $p=0.25$ is sufficient to achieve the test accuracy performance of SA when the system is trained for $200$ rounds. Thus, the required communication/computational resources for guaranteeing the reliability, which is proportional to $np$, can be reduced to $25 \%$ of the conventional wisdom in federated learning.
Similar behaviors are observed from the experiments on AT\&T Face dataset, as in Appendix \ref{exp:addrel}.

\begin{figure}[t!]
	\vspace{-3mm}
	\centering
	\small
	\subfloat
	[i.i.d. data across clients]
	{\includegraphics[height=23mm ]{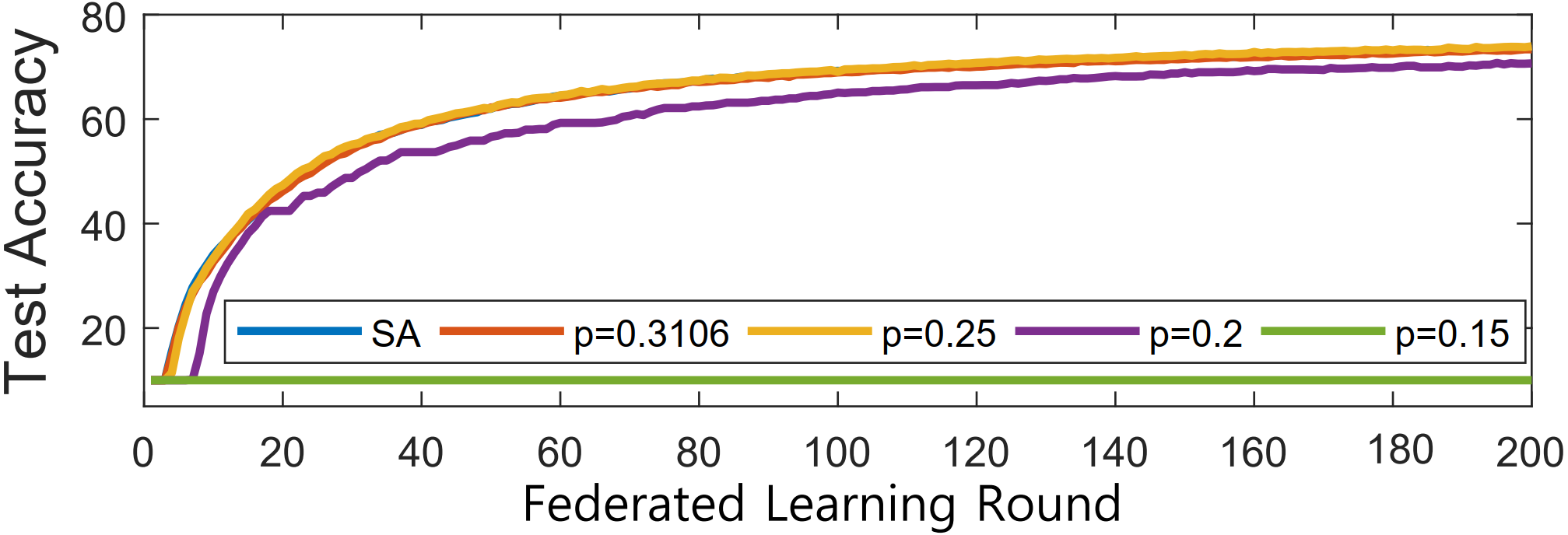}\label{Fig:CIFAR_iid}}
	\ \ 
	\vspace{-3mm}
	\subfloat
	[non-i.i.d. data across clients]
	{\includegraphics[height=23mm ]{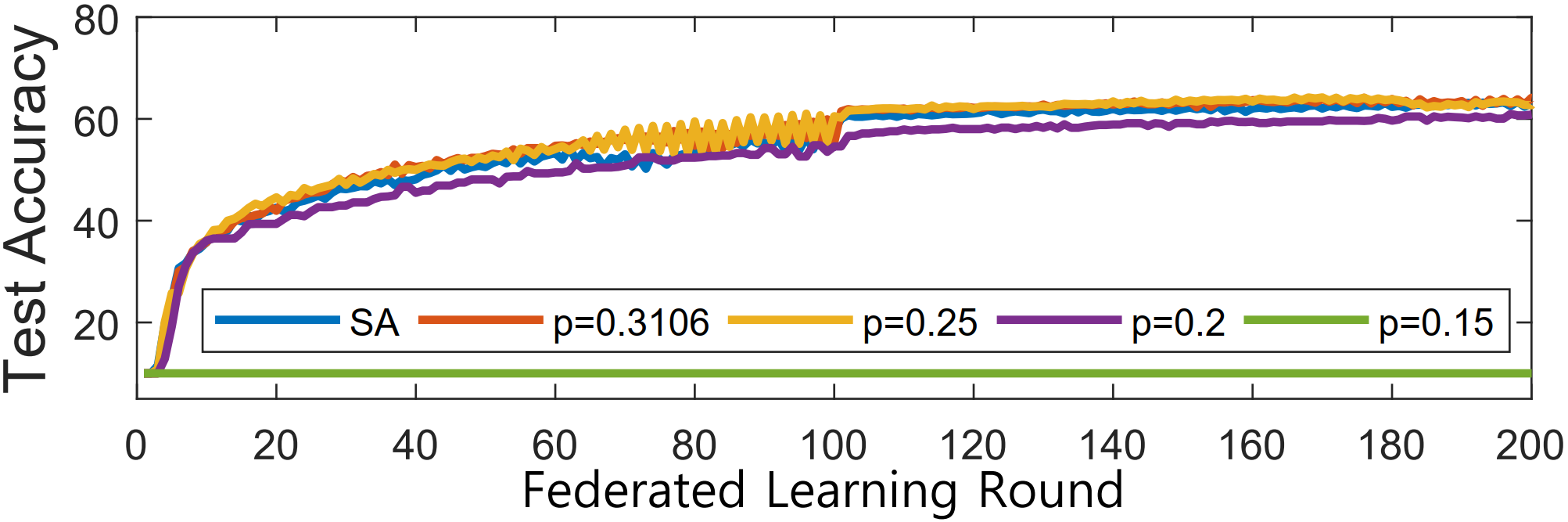}\label{Fig:CIFAR_non_iid}}
	\caption{Test accuracies of SA versus proposed CCESA$(n,p)$ with various connection probability $p$, for federated learning using CIFAR-10 dataset. Here, we set $n=1000$ and $q_{\text{total}}=0.1$. The suggested CCESA achieves the ideal test accuracy by using only $75\%$ of the communication/computational resources used in the conventional SA.
		This shows the reliability of CCESA in real-world scenarios.
	}
	\label{Fig:CIFAR}
	\vspace{-3mm}
\end{figure}

\begin{table}[t]
	\scriptsize 
	\centering
	\begin{tabular}{c|c|c|c|c}
		\toprule
		\textbf{Schemes $\backslash$ Number of training data ($n_{\text{train}}$)} &   $5000$ & $10000$ & $15000$
		&  $50000$ \\
		\midrule
		\textbf{Federated Averaging}~\citep{mcmahan2017communication} 
		&  72.49\% & 70.72\% & 72.80\% & 66.47\% \\
		\textbf{Secure Aggregation (SA)}~\citep{bonawitz2017practical} 
		& 49.67\% & 49.96\% & 49.85\%  & 49.33\%\\
		\textbf{CCESA} (Suggested)
		& 49.29\% & 50.14\% & 49.02\% & 50.00\%  \\
		\bottomrule
	\end{tabular}
	\vspace{-3mm}
	\caption{Accuracy of the membership inference attack on local models trained on CIFAR-10. The scheme with a higher attack accuracy is more vulnerable to the inference attack. 
		In order to maximize the uncertainty of the membership inference, 
		the test set for the attack model consists of $5000$ members (training data points) and $5000$ non-members (evaluation data points).
		For the proposed CCESA, the attacker is no better than the random guess with accuracy = $50\%$, showing the privacy-preserving ability of CCESA.}
	\vspace{-3mm}
	\label{Table:MIA:accuracy}
\end{table}

\vspace{-2mm}
\subsection{Privacy} \label{privacyexp}
\vspace{-2mm}
We first consider a privacy threat called model inversion attack~\citep{fredrikson2015model}. The basic setup is as follows: the attacker eavesdrops the masked model $\tilde{\theta}_i$ sent from client $i$ to the server, and reconstructs the face image of a target client. Under this setting, we compared how the eavesdropped model reveals the information on the raw data for various schemes. As in Fig.~\ref{Fig:Main} and Fig.~\ref{Fig:Inversion_app} in Appendix \ref{app:privacyexp}, the vanilla federated averaging~\citep{mcmahan2017communication} with no privacy-preserving techniques reveals the characteristics of individual's face, compromising the privacy of clients. 
On the other hand, both SA and CCESA 
do not allow any clue on the client's face; these schemes are resilient to the model inversion attack and preserve the privacy of clients.
This observation is consistent with our theoretical results
that the proposed CCESA guarantees data privacy. 
We also considered another privacy threat called the membership inference attack~\citep{shokri2017membership}. Here, the attacker eavesdrops the masked model $\tilde{\theta}_i$ and guesses whether a target data is a member of the training dataset.
Table~\ref{Table:MIA:accuracy} summarizes the accuracy 
of the inference attack for CIFAR-10 dataset, under the federated learning setup where $n_{\text{train}}$ training data is equally distributed into $n=10$ clients.
The attack accuracy reaches near $70\%$ for federated averaging, while SA and CCESA have the attack accuracy of near $50\%$, similar to the performance of the random guess.

\vspace{-2mm}
\section{Conclusion}
\label{conclusion}
\vspace{-2mm}
We devised communication-computation efficient secure aggregation (CCESA) which successfully preserves the data privacy of federated learning in a highly resource-efficient manner. Based on graph-theoretic analysis, we showed that the amount of communication/computational resources can be reduced at least by a factor of $\sqrt{n/ \log n}$ relative to the existing secure solution without sacrificing data privacy.
Our experiments on real datasets, measuring the test accuracy and the privacy leakage, show that CCESA requires only $20 \sim 30 \%$ of resources than the conventional wisdom, to achieve the same level of reliability and privacy. 

\newpage

\newpage
\appendix
\section{Additional experimental results}
\subsection{Reliability} \label{exp:addrel}
In Fig.~\ref{Fig:CIFAR} of the main paper, we provided the experimental results on the reliaiblity of CCESA on CIFAR-10 dataset. Similarly, Fig.~\ref{Fig:ATT} shows the reliability of CCESA in AT\&T Face dataset, where the model is trained over $n=40$ clients.
We plotted the test accuracies of SA and the suggested CCESA($n,p$) for various $p$. 
In both settings of $q_{\text{total}}$, selecting $p=0.7$ is sufficient to achieve the test accuracy performance of SA when the system is trained for $50$ rounds. Thus, the required communication/computational resources for guaranteeing the reliability, which is proportional to $np$, can be reduced to $70 \%$ of the conventional wisdom in federated learning.

\begin{figure}[h]
	\centering
	\small
	\subfloat
	[$q_{\text{total}} = 0$]
	{\includegraphics[height=21.5mm] {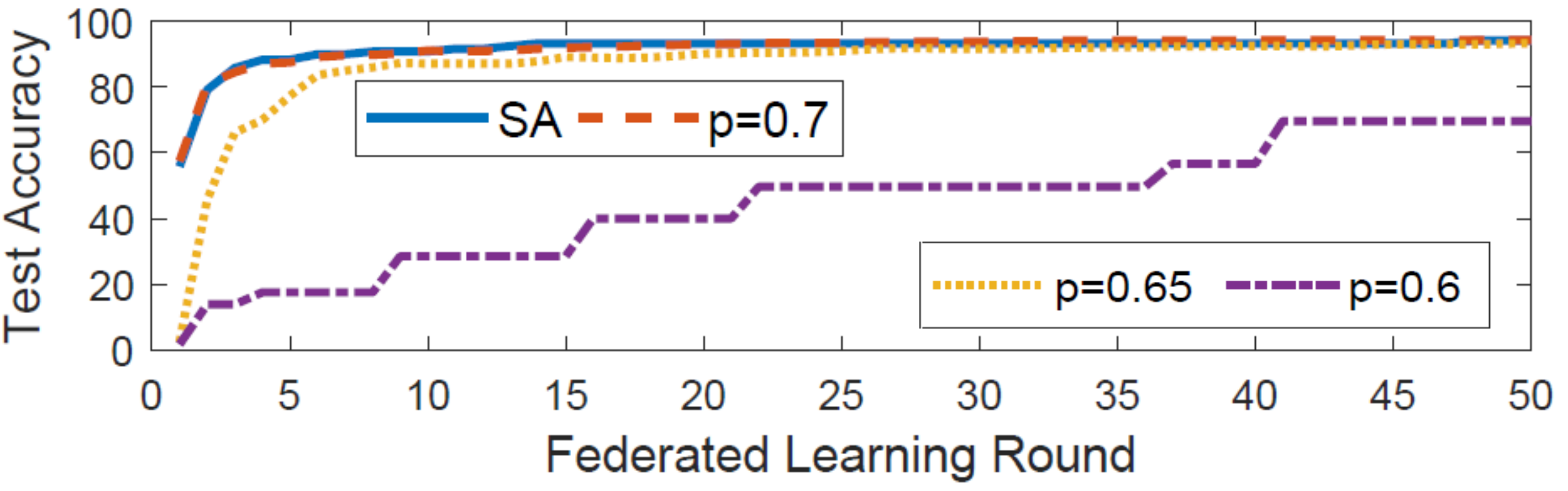}\label{Fig:ATT0}}
	\ \ 
	\vspace{-3mm}
	\subfloat
	[$q_{\text{total}} = 0.1$]
	{\includegraphics[height=21.5mm]{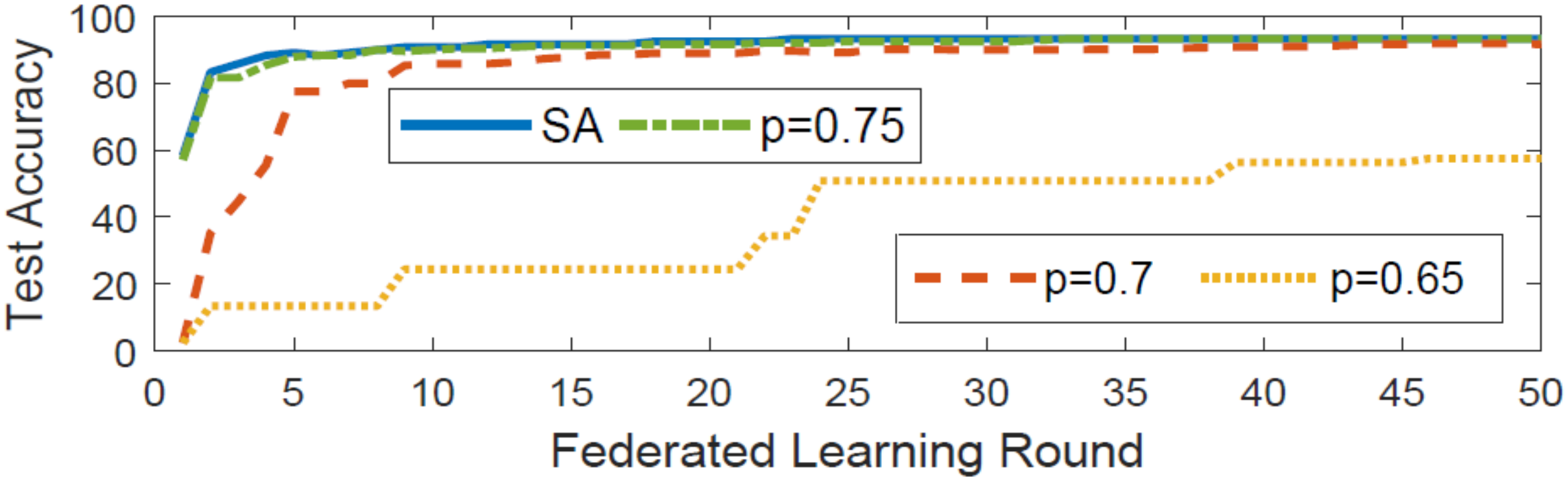}\label{Fig:ATT_01}}
	\caption{Test accuracies of SA versus proposed CCESA$(n,p)$ with various connection probability $p$, for federated learning using the AT\&T face 
		dataset. 
		Here, we set $n=40$ and $t = 21$. The suggested CCESA achieves the ideal test accuracy by using only $70\%$ of the communication/computational resources used in the conventional SA.
	}
	\label{Fig:ATT}
	\vspace{-3mm}
\end{figure}

\subsection{Privacy} \label{app:privacyexp}
In Section~\ref{privacyexp} and Fig.~\ref{Fig:Main} of the main paper, we provided the experimental results on the AT\&T Face dataset, under the model inversion attack. 
In Fig.~\ref{Fig:Inversion_app}, we provide additional experimental results on the same dataset for different participants. Similar to the result in Fig.~\ref{Fig:Main}, the model inversion attack successfully reveals the individuals identity in federated averaging~\citep{mcmahan2017communication}, while the privacy attack is not effective in both SA and the suggested CCESA. 

\begin{figure}
	\vspace{-2mm}
	\begin{center}
		\includegraphics[width=0.65\linewidth]{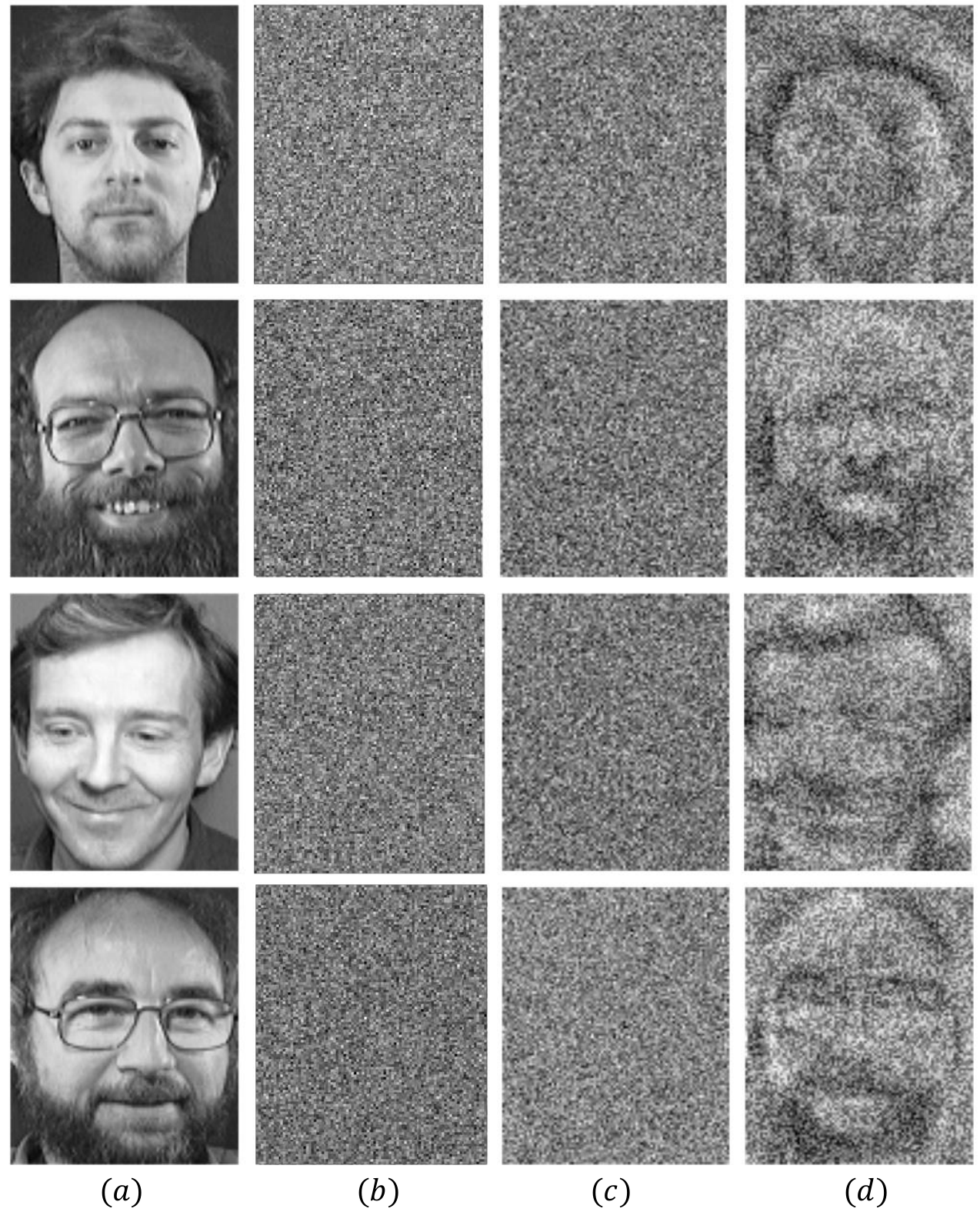}    
	\end{center}
	\caption{The result of model inversion attack to three schemes, (b) the suggested scheme (CCESA), (c) SA~\citep{bonawitz2017practical} and (d) federated averaging~\citep{mcmahan2017communication}, for AT\&T Face dataset. 
		The original training images at (a) can be successfully reconstructed by the attack only in federated averaging setup, i.e., both SA and CCESA achieve the same level of privacy.} 
	\label{Fig:Inversion_app}
	\vspace{-3mm}
\end{figure}

In Section \ref{privacyexp}, we have also considered another type of privacy threat called membership inference attack~\citep{shokri2017membership}, where the attacker observes masked local model $\tilde{\theta}_i$ sent from client $i$ to the server, and guesses whether a particular data is the member of the training set.
We measured three types of performance metrics of the attacker: \textit{accuracy} (the fraction of the records correctly estimated the membership), \textit{precision} (the fraction of the responses inferred as members of the training dataset that are indeed members) and \textit{recall} (the fraction of the training data that the attacker can correctly infer as members).
Table~\ref{Table:MIA:accuracy} summarizes the attack accuracy result, while Table~\ref{Table:MIA:precision} shows the attack precision for CIFAR-10 dataset.
We also observed that recall is close to $1$ for all schemes.
Similar to the results on the attack accuracy, Table~\ref{Table:MIA:precision} shows that the attack precision of federated averaging reaches near $70\%$, while that of SA and CCESA remain around the baseline performance of random guess.
This shows that both SA and CCESA do not reveal any clue on the training set.

\begin{table}[h]
	\footnotesize 
	\centering
	\begin{tabular}{c|c|c|c|c}
		\toprule  
		\textbf{Schemes $\backslash$ Number of training data ($n_{\text{train}}$)} &   $5000$ & $10000$ & $15000$
		&  $50000$ \\
		\midrule
		\textbf{Federated Averaging}~\citep{mcmahan2017communication} 
		&  70.41\% & 65.82\% & 65.89\% &  60.62\% \\
		\textbf{Secure Aggregation (SA)}~\citep{bonawitz2017practical} 
		& 49.78\% & 49.97\% & 49.91\%  & 49.10\%\\
		\textbf{CCESA} (Suggested)
		& 49.48\% & 50.07\% & 49.16\% & 50.00\%  \\
		\bottomrule
	\end{tabular}
	\vspace{-3mm}
	\caption{Precision of the membership inference attack on local models trained on CIFAR-10. The scheme with a higher attack precision is more vulnerable to the inference attack. 
		For the proposed CCESA, the attacker is no better than the random guess with precision = $50\%$, showing the privacy-preserving ability of CCESA.}
	\label{Table:MIA:precision}
\end{table}

\section{Proofs}

\subsection{Proof of Theorem \ref{Theorem2}} \label{Pf:Theorem2}
\begin{proof}
	Note that the sum of masked local models obtained by the server is expressed as 
	\begin{equation*}
	\sum_{i \in V_3} \tilde{\theta}_i = \sum_{i \in V_3} \theta_i + \sum_{i \in V_3} \textbf{PRG}(b_i) + \mathbf{z}
	\end{equation*}
	where
	\begin{equation*}
	\mathbf{z}= \sum_{i \in V_3 } \sum_{j \in V_2 \cap Adj(i);i<j} \textbf{PRG}(s_{i,j})- \sum_{i \in  V_3 } \sum_{j \in V_2 \cap Adj(i);i>j} \textbf{PRG}(s_{i,j}).
	\end{equation*}
	Here, $\mathbf{z}$ can be rewritten as
	\begin{align*}
	\mathbf{z}=& \sum_{i \in V_3} \sum_{j \in V_2 \cap Adj(i);i<j} \textbf{PRG}(s_{i,j})- \sum_{i \in V_3} \sum_{j \in V_2 \cap Adj(i);i>j} \textbf{PRG}(s_{i,j}) \\
	=& \sum_{i \in V_3} \sum_{j \in V_2 \cap Adj(i);i<j} \textbf{PRG}(s_{i,j})- \sum_{j \in V_2} \sum_{i \in V_3 \cap Adj(j);i>j} \textbf{PRG}(s_{i,j}) \\
	\stackrel{(a)}{=}& \sum_{i \in V_3} \sum_{j \in V_2 \cap Adj(i);i<j} \textbf{PRG}(s_{i,j})- \sum_{i \in V_2} \sum_{j \in V_3 \cap Adj(i);i<j} \textbf{PRG}(s_{i,j}) \\
	= &\sum_{i \in V_3} \sum_{j \in V_3 \cap Adj(i);i<j} \textbf{PRG}(s_{i,j}) + \sum_{i \in V_3 } \sum_{j \in (V_2 \backslash V_3 ) \cap Adj(i);i<j} \textbf{PRG}(s_{i,j}) \\
	&- \sum_{i \in V_3} \sum_{j \in V_3 \cap Adj(i);i<j} \textbf{PRG}(s_{i,j}) - \sum_{i \in V_2 \backslash V_3} \sum_{j \in V_3 \cap Adj(i);i<j} \textbf{PRG}(s_{i,j}) \\
	=& \sum_{i \in V_3 } \sum_{j \in (V_2 \backslash V_3 ) \cap Adj(i);i<j} \textbf{PRG}(s_{i,j}) - \sum_{j \in V_3 } \sum_{i \in (V_2 \backslash V_3 )\cap Adj(j);i<j} \textbf{PRG}(s_{i,j}) \\
	\stackrel{(b)}{=}& \sum_{i \in V_3 } \sum_{j \in (V_2 \backslash V_3 ) \cap Adj(i);i<j} \textbf{PRG}(s_{i,j}) - \sum_{i \in V_3 } \sum_{j \in (V_2 \backslash V_3 )\cap Adj(i);i>j} \textbf{PRG}(s_{i,j}) \\
	=& \sum_{i\in V_3} \sum_{j \in (V_2 \backslash V_3 ) \cap Adj(i);i<j} \textbf{PRG}(s_{i,j}) - \sum_{i\in V_3} \sum_{j \in (V_2 \backslash V_3 ) \cap Adj(i);i>j} \textbf{PRG}(s_{i,j}) \\
	=& \sum_{i\in V_3} \sum_{j \in (V_3^+\backslash V_3) \cap Adj(i) ;i<j} \textbf{PRG}(s_{i,j}) - \sum_{i\in V_3} \sum_{j \in (V_3^+\backslash V_3) \cap Adj(i);i>j} \textbf{PRG}(s_{i,j}),
	\end{align*}
	where $(a)$ and $(b)$ come from $s_{i,j} = s_{j,i}$.
	In order to obtain the sum of unmasked local models $\sum_{i \in V_3} \theta_i$ from the sum of masked local models $\sum_{i \in V_3} \tilde{\theta}_i$, the server should cancel out all the random terms in $\sum_{i \in V_3} \textbf{PRG}(b_i) + \mathbf{z}$. In other words, the server should reconstruct $b_i$ for all $i \in V_3$ and $s_j^{SK}$ for all $j \in V_3^+ \backslash V_3$.
	Since the server can obtain $|(Adj(i) \cup \{i\}) \cap V_4|$ secret shares of client $i$ in \textbf{Step 3}, $|(Adj(i) \cup \{i\}) \cap V_4| \geq t_i$ for all $i \in V_3^+$ is a sufficient condition for reliability.
	
	
	Now we prove the converse part by contrapositive. Suppose there exists $i \in V_3^+$ such that $|(Adj(i) \cup \{i\}) \cap V_4 | < t_i$. 
	In this case, note that the server cannot reconstruct both $s_i^{SK}$ and $b_i$ from the shares.
	If $i \in V_3$, the server cannot subtract $\textbf{PRG}(b_i)$ from $\sum_{i\in V_3} \tilde{\theta}_i$. As a result, the server cannot obtain $\sum_{i\in V_3} \theta_i$.
	If $i \in V_3^+ \backslash V_3$, the server cannot subtract $\textbf{PRG}(s_{i,j})$ for all $j \in V_3$ since the server does not have any knowledge of neither $s_i^{SK}$ nor $s_j^{SK}$.
	Therefore, the server cannot compute $\sum_{i\in V_3} \theta_i$, which completes the proof.

\end{proof}

\subsection{Proof of Lemma \ref{Lemma:privacy}} \label{Pf:Lemma:privacy}
\begin{proof}
	Let $\mathcal{T} \subset V_3$ be an arbitrary set of clients satisfying $\mathcal{T} \notin \{ \varnothing, V_3 \}$.
	It is sufficient to prove the following statement: given a connected graph $G_3$, an eavesdropper cannot obtain the partial sum of local models $\sum_{i\in \mathcal{T}} \theta_i$ from the sum of masked models $ \sum_{i \in \mathcal{T}} \tilde{\theta}_i$.
	More formally, we need to prove
	\begin{equation*}
	H(\sum_{i\in \mathcal{T}} \theta_i |\sum_{i\in \mathcal{T}} \tilde{\theta}_i ) = H(\sum_{i\in \mathcal{T}} \theta_i).
	\end{equation*}
	
	Note that the sum of masked local models  $\sum_{i \in \mathcal{T}} \tilde{\theta}_i$ accessible to the eavesdropper is expressed as
	\begin{equation} \label{eq:thetatilde}
	\sum_{i \in \mathcal{T}} \tilde{\theta}_i = \sum_{i \in \mathcal{T}} \theta_i + \sum_{i \in \mathcal{T}} \textbf{PRG}(b_i) + \mathbf{z},
	\end{equation}
	where 
	\begin{align*}
	\mathbf{z}=& \sum_{i \in \mathcal{T}} \sum_{j \in V_2 \cap Adj(i);i<j} \textbf{PRG}(s_{i,j})- \sum_{i \in \mathcal{T}} \sum_{j \in V_2 \cap Adj(i);i>j} \textbf{PRG}(s_{i,j}) \\
	=& \sum_{i \in \mathcal{T}} \sum_{j \in V_2 \cap Adj(i);i<j} \textbf{PRG}(s_{i,j})- \sum_{j \in V_2} \sum_{i \in \mathcal{T} \cap Adj(j);i>j} \textbf{PRG}(s_{i,j}) \\
	\stackrel{(a)}{=}& \sum_{i \in \mathcal{T}} \sum_{j \in V_2 \cap Adj(i);i<j} \textbf{PRG}(s_{i,j})- \sum_{i \in V_2} \sum_{j \in \mathcal{T} \cap Adj(i);i<j} \textbf{PRG}(s_{i,j}) \\
	= &\sum_{i \in \mathcal{T}} \sum_{j \in \mathcal{T} \cap Adj(i);i<j} \textbf{PRG}(s_{i,j}) + \sum_{i \in \mathcal{T} } \sum_{j \in (V_2 \backslash \mathcal{T} ) \cap Adj(i);i<j} \textbf{PRG}(s_{i,j}) \\
	&- \sum_{i \in \mathcal{T}} \sum_{j \in \mathcal{T} \cap Adj(i);i<j} \textbf{PRG}(s_{i,j}) - \sum_{i \in V_2 \backslash \mathcal{T}} \sum_{j \in \mathcal{T} \cap Adj(i);i<j} \textbf{PRG}(s_{i,j}) \\
	=& \sum_{i \in \mathcal{T} } \sum_{j \in (V_2 \backslash \mathcal{T} ) \cap Adj(i);i<j} \textbf{PRG}(s_{i,j}) - \sum_{j \in \mathcal{T} } \sum_{i \in (V_2 \backslash \mathcal{T} )\cap Adj(j);i<j} \textbf{PRG}(s_{i,j}) \\
	\stackrel{(b)}{=}& \sum_{i \in \mathcal{T} } \sum_{j \in (V_2 \backslash \mathcal{T} ) \cap Adj(i);i<j} \textbf{PRG}(s_{i,j}) - \sum_{i \in \mathcal{T} } \sum_{j \in (V_2 \backslash \mathcal{T} )\cap Adj(i);i>j} \textbf{PRG}(s_{i,j}) \\
	=& \big \{\sum_{i\in \mathcal{T}} \sum_{j \in (V_2 \backslash V_3 ) \cap Adj(i);i<j} \textbf{PRG}(s_{i,j}) - \sum_{i\in \mathcal{T}} \sum_{j \in (V_2 \backslash V_3 ) \cap Adj(i);i>j} \textbf{PRG}(s_{i,j}) \big \}\\
	&+ \big \{\sum_{i\in \mathcal{T}} \sum_{j \in (V_3 \backslash \mathcal{T} ) \cap Adj(i);i<j} \textbf{PRG}(s_{i,j}) -\sum_{i\in \mathcal{T}} \sum_{j \in (V_3 \backslash \mathcal{T} ) \cap Adj(i);i>j} \textbf{PRG}(s_{i,j})\big \}.
	\end{align*}
	Here, $(a)$ and $(b)$ come from $s_{i,j} = s_{j,i}$.
	If $G_3= (V_3, E_3)$ is connected, there exists an edge $e = \{p,q \}$ such that $p \in \mathcal{T}$ and $q \in (V_3 \backslash \mathcal{T})$.
	As a consequence, the pseudorandom term $\textbf{PRG}(s_{p,q})$ is included in $\mathbf{z}$, and its coefficient $c_{p,q}$ is determined as $1$ (if $p<q$), or $-1$ (if $p>q$).
		Note that equation (\ref{eq:thetatilde}) can be rewritten as
	\begin{equation*}
	\sum_{i\in \mathcal{T}} \tilde{\theta}_i =  \sum_{i \in \mathcal{T}} \theta_i + c_{p,q} \cdot \textbf{PRG}(s_{p,q}) + \mathbf{r},
	\end{equation*}
	where $\mathbf{r}$ is the sum of the pseudorandom terms which do not include \textbf{PRG}($s_{p,q}$). 
		In order to unmask $\textbf{PRG}(s_{p,q})$, the eavesdropper needs to know at least one of the secret keys of clients $p$ and $q$. 
		However, the eavesdropper cannot obtain any shares of the secret keys since the server do not request the shares of $s_{p}^{SK}$ and $s_{q}^{SK}$ in step 3.
		Therefore, due to the randomness of the pseudorandom generator, $H(\sum_{i\in \mathcal{T}} \theta_i |\sum_{i\in \mathcal{T}} \tilde{\theta}_i ) = H(\sum_{i\in \mathcal{T}} \theta_i)$ holds, which completes the proof.
\end{proof}    

\subsection{Proof of Theorem \ref{Thm:privacy}}   \label{pf:Thm:privacy}
\begin{proof}
	We first prove that the system is private if $\bm{G} \in \mathcal{G}_{C} \cup \mathcal{G}_{NI}$. When $\bm{G} \in \mathcal{G}_{C}$, the statement holds directly from Lemma~\ref{Lemma:privacy}. Thus, below we only prove for the case of $\bm{G} \in \mathcal{G}_{NI}$.
	Note that it is sufficient to prove the following statement: given a graph evolution $\bm{G} = (G_0, G_1, \cdots, G_4) \in \mathcal{G}_{NI}$, an eavesdropper cannot obtain the partial sum of local models $\sum_{i\in \mathcal{T}} \theta_i$ from the sum of masked models $ \sum_{i \in \mathcal{T}} \tilde{\theta}_i$ for every $\mathcal{T} \subset V_3$ satisfying $\mathcal{T} \notin \{ V_3, \varnothing \}$.
	More formally, we need to prove
	\begin{equation}\label{Eqn:entropy_identical}
	H(\sum_{i\in \mathcal{T}} \theta_i |\sum_{i\in \mathcal{T}} \tilde{\theta}_i ) = H(\sum_{i\in \mathcal{T}} \theta_i).
	\end{equation}
	When $\mathcal{T} = C_l$ for some $l \in [\kappa]$, there exists $i^{\star} \in C_{l}^+$ such that node $i$ is not informative, according to the definition of $\mathcal{G}_{NI}$. 
	Thus, the server (as well as eavesdroppers) cannot reconstruct both $b_i$ and $s_i^{SK}$. Note that the sum of masked models is 
	\begin{equation}\label{Eqn:unmasking_thm2}
	\sum_{i \in \mathcal{T}} \tilde{\theta}_i = \sum_{i \in \mathcal{T}} \theta_i + \sum_{i \in \mathcal{T}} \textbf{PRG}(b_i) + \sum_{j \in \mathcal{T}} \sum_{i \in V_2 \cup Adj(j)} (-1)^{\mathds{1}_{j > i}} \textbf{PRG}(s_{j,i}),
	\end{equation}
	where $\mathds{1}_{A}$ is the indicator function which is value $1$ when the statement $A$ is true, and $0$ otherwise.
	When $i^{\star} \in C_l = \mathcal{T}$, we cannot unmask $\textbf{PRG}(b_{i^{\star}})$ in this equation. When $i^{\star} \in C_l^+ \backslash C_l$, there exists $j \in C_l$ such that $ \{ i^{\star}, j \} \in E$. Note that the eavesdropper needs to know either $s_{j}^{SK}$ or $s_{i^{\star}}^{SK}$, in order to compute $\textbf{PRG}(s_{j,i^{\star}})$. Since $i^{\star}$ is not informative, the eavesdropper cannot get $s_{i^{\star}}^{SK}$. Moreover, since the server has already requested the shares of $b_j$, the eavesdropper cannot access $s_{j}^{SK}$. Thus, the eavesdropper cannot unmask $\textbf{PRG}(s_{j,i^{\star}})$
	from (\ref{Eqn:unmasking_thm2}).
	All in all, the eavesdropper cannot unmask at least one pseudorandom term in $\sum_{i \in \mathcal{T}} \tilde{\theta}_i$, proving (\ref{Eqn:entropy_identical}).

	When $\mathcal{T} \neq C_l \ \  $  $ \forall l \in [\kappa]$, there exists an edge $e = \{p, q\}$ such that $p \in \mathcal{T}$ and $q \in (V_3 \backslash \mathcal{T})$. Thus, we cannot unmask $\textbf{PRG}(s_{p,q})$ from $\sum_{i \in \mathcal{T}} \tilde{\theta}_i$. Following the steps in Section~\ref{Pf:Lemma:privacy}, we have (\ref{Eqn:entropy_identical}).
	
	Now, we prove the converse by contrapositive: if $\bm{G} = (G_0, G_1, \cdots, G_4) \in \mathcal{G}_{D} \cap \mathcal{G}_{NI}^c$, then the system is not private. In other words, we need to prove the following statement: if $G_3$ is disconnected and there exists a component $C_l$ such that all nodes in $C_l$ are informative, then the system is not private.
	Let $\mathcal{T} = C_l$. Then, the eavesdropper obtains
	\begin{equation*}
	\sum_{i \in \mathcal{T}} \tilde{\theta}_i = \sum_{i \in \mathcal{T}} \theta_i + \sum_{i \in \mathcal{T}} \textbf{PRG}(b_i) + \mathbf{z},
	\end{equation*}
	where
	\begin{align*}
	\mathbf{z} = & \sum_{i \in \mathcal{T}} \sum_{j \in V_2 \cap Adj(i)} (-1)^{\mathds{1}_{i > j}} \textbf{PRG}(s_{i,j})  \stackrel{(a)}{=} \sum_{i \in \mathcal{T}} \sum_{j \in \mathcal{T}} (-1)^{\mathds{1}_{i > j}} \textbf{PRG}(s_{i,j}) \stackrel{(b)}{=} 0.
	\end{align*} 
	Note that $(a)$ holds since $\mathcal{T}$ is a component, $(b)$ holds from $s_{i,j} = s_{j,i}$.
	Moreover, the eavesdropper can reconstruct $b_i$ for all $i \in \mathcal{T}$ in Step 3 of the algorithm. Thus, the eavesdropper can successfully unmask random terms in $\sum_{i \in \mathcal{T}} \tilde{\theta}_i$ and obtain $\sum_{i \in \mathcal{T}} \theta_i$. This completes the proof.
\end{proof}

\subsection{Proof of Theorem \ref{theorem5}} \label{pf:Thm:aasrel}
\begin{proof}
	Consider Erd\H{o}s-R\'enyi assignment graph $G \in G(n,p)$. Let $N_i := |Adj(i)|$ be the degree of node $i$, and $X_i := |Adj(i) \cap V_4|$ be the number of clients (except client $i$) that successfully send the shares of client $i$ to the server in \textbf{Step 3}. Then, $N_i$ and $X_i$ follow the binomial distributions
	\begin{equation*}
	N_i \sim B(n-1,p), \hspace{2mm}X_i \sim B(N_i,(1-q)^4) = B(n-1,p(1-q)^4),
	\end{equation*}
	respectively. By applying Hoeffding's inequality on random variable $X_i$, we obtain
	\begin{equation*}
	P(X_i<(n-1)p(1-q)^4 - \sqrt{(n-1)\log(n-1)} )\leq 1/(n-1)^2.
	\end{equation*}
	
	Let $E$ be the event that the system is not reliable, \textit{i.e.}, the sum of local models $\sum_{i\in V_3} \theta_i$ is not reconstructed by the server, and $E_i$ be the event $\{|(Adj(i) \cup \{i\}) \cap V_4|<t \}$, \textit{i.e.}, a secret of client $i$ is not reconstructed by the server.
	For a given $p > \frac{t+\sqrt{(n-1)\log(n-1)}}{(n-1)(1-q)^4}$, we obtain
	\begin{align*}
	P(E) \stackrel{(a)}{=}& P(\cup_{i\in V_3^+} E_i) \leq P(\cup_{i\in V_3^+}\{X_i<t\}) \leq \sum_{i\in V_3^+} P(X_i<t)\\
	\leq& \sum_{i\in [n]} P(X_i<t) =nP(X_1<t) \\
	\leq& nP (X_1<(n-1)p(1-q)^4 - \sqrt{(n-1)\log(n-1)}) \leq \frac{n}{(n-1)^2} \xrightarrow{n\rightarrow \infty } 0,
	\end{align*}
	where $(a)$ comes from Theorem \ref{Theorem2}.
	Therefore, we conclude that CCESA($n,p$) is asymptotically almost surely (a.a.s.) reliable if $p > \frac{t+\sqrt{(n-1)\log(n-1)}}{(n-1)(1-q)^4}$.
	Furthermore, based on the parameter selection rule of $t$ in Section~\ref{tsetting}, we obtain a lower bound on $p$ as
	\begin{align*}
	p >& \frac{t+\sqrt{(n-1)\log(n-1)}}{(n-1)(1-q)^4} 
	\geq 
	\frac{ \frac{(n-1)p + \sqrt{(n-1)\log(n-1)}+1}{2} +\sqrt{(n-1)\log(n-1)} -1 }{(n-1)(1-q)^4}.
	\end{align*}
	Rearranging the above inequality with respect to $p$ yields
	\begin{equation*}
	p> \frac{3 \sqrt{(n-1) \log (n-1)}-1}{(n-1)(2(1-q)^4-1)}.
	\end{equation*}

\end{proof}

\subsection{Proof of Theorem \ref{theorem3}} \label{Pf:theorem3}
\begin{proof}
	Let $X:=|V_3|$ be the number of clients sending its masked local model in \textbf{Step 2}.
	Then, $X$ follows Binomial random variable $B(n,(1-q)^3)$. 
	Given assignment graph $G$ of CCESA($n,p$), note that the induced subgraph $G_3 = G-V \backslash V_3$ is an Erd\H{o}s-R\'enyi graph $G(X,p)$.
	
	First, we prove     
	\begin{equation} \label{c1}
	P(G_3 \text{ is connected}\big ||X-n(1-q)^3| \leq \sqrt{n \ln n} ) \xrightarrow{n\rightarrow \infty} 1,
	\end{equation}
	if $p>p^{\star}= \frac{\ln(\lceil n(1-q)^3-\sqrt{n \ln n} \rceil)}{\lceil n(1-q)^3-\sqrt{n \ln n} \rceil} (1+\epsilon)$.
	The left hand side of (\ref{c1}) can be rewritten as
	\begin{align*}
	& P(G_3 \text{ is connected}\big ||X-n(1-q)^3| \leq \sqrt{n \ln n} )  \\
	& = \frac{ \sum_{l \in [n(1-q)^3-\sqrt{n \ln n}, n(1-q)^3+\sqrt{n \ln n}]}  P(X=l)P(G(l,p) \text{ is connected})}{ \sum_{l \in [n(1-q)^3-\sqrt{n \ln n}, n(1-q)^3+\sqrt{n \ln n}]}  P(X=l)}.
	\end{align*}
	Here, we use a well-known property of Erd\H{o}s-R\'enyi graph: $G(l,p)$ is asymptotically almost surely (a.a.s.) connected if $p>\frac{(1+\epsilon) \ln l}{l}$ for some $\epsilon > 0$.
	Since $\frac{\ln l}{l}$ is a decreasing function, $G(l,p)$ is a.a.s. connected for all $l\in [n(1-q)^3-\sqrt{n \ln n}, n(1-q)^3+\sqrt{n \ln n}]$ when $p> \frac{\ln(\lceil n(1-q)^3-\sqrt{n \ln n} \rceil)}{\lceil n(1-q)^3-\sqrt{n \ln n} \rceil}$. 
	Thus, for given $p>p^{\star}$, we can conclude
	\begin{equation*} 
	P(G_3 \text{ is connected}\big ||X-n(1-q)^3| \leq \sqrt{n \ln n} ) \xrightarrow{n\rightarrow \infty} 1.
	\end{equation*}
	
	Now, we will prove that CCESA($n,p$) is a.a.s. private when $p>p^{\star}$.
	The probability that CCESA($n,p$) is private is lower bounded by
	\begin{align*}
	&P(\text{CCESA}(n,p) \text{ is private}) 
	\stackrel{(a)}{\geq} P(G_3 \text{ is connected}) \\
	& = P(|X-n(1-q)^3| \leq \sqrt{n \ln n}) P(G_3 \text{ is connected}\big ||X-n(1-q)^3| \leq \sqrt{n \ln n}) \\
	& \hspace{3mm} + P(|X-n(1-q)^3| > \sqrt{n \ln n}) P(G_3 \text{ is connected}\big ||X-n(1-q)^3| > \sqrt{n \ln n}) \\
	& \stackrel{(b)}{\geq} (1-2/n^2)  P(G_3 \text{ is connected}\big ||X-n(1-q)^3| \leq \sqrt{n \ln n})
	\xrightarrow{n\rightarrow \infty} 1,
	\end{align*} 
	where $(a)$ comes from Lemma \ref{Lemma:privacy} and $(b)$ comes from Hoeffding's inequality
	\begin{equation*}
	P(|X-n(1-q)^3| \leq \sqrt{n \ln n}) \geq 1-2/n^2,
	\end{equation*}
	which completes the proof.
\end{proof}

\subsection{Proof of Theorem \ref{Finite_correctness}}
\begin{proof}
	Consider an Erd\H{o}s-R\'enyi assignment graph $G \in G(n,p)$. 
	Let $N_i := |Adj(i)|$ be the degree of node $i$, and $X_i := |Adj(i) \cap V_4|$ be the number of clients (except client $i$) that successfully send the shares of client $i$ to the server in \textbf{Step 3}. Then, $N_i$ and $X_i$ follow the binomial distributions
	\begin{equation*}
	N_i \sim B(n-1,p), \hspace{2mm}X_i \sim B(N_i,(1-q)^4) = B(n-1,p(1-q)^4),
	\end{equation*}
	respectively.
	Let $E_i$ be an event $\{|(Adj(i) \cup \{i\}) \cap V_4|<t \}$, \textit{i.e.}, a secret of client $i$ is not reconstructed by the server. We obtain an upper bound on $P(E_i)$ as
	\begin{align*}
	P(E_i) \leq P(X_i<t) =& \sum_{i=0}^{t-1} {n-1 \choose i} (p(1-q)^4)^i (1-p(1-q)^4)^{(n-1-i)} \\
	\stackrel{(a)}{=}& e^{-(n-1)D(\frac{t-1}{n-1}||p(1-q)^4)}
	\end{align*}
	where $(a)$ comes from Chernoff bound on the binomial distribution.
	Thus, $P_e^{(r)}$ is upper bounded by
	\begin{align*}
	P_e^{(r)} \stackrel{(b)}{=}&P(\cup_{i\in V_3^+} E_i) \leq P(\cup_{i\in V_3^+}\{X_i<t\}) \leq \sum_{i\in V_3^+} P(X_i<t)\\
	\leq & \sum_{i\in [n]} P(X_i<t) =nP(X_1<t) 
	=
	n e^{-(n-1)D(\frac{t-1}{n-1}||p(1-q)^4)},
	\end{align*}
	where $(b)$ comes from Theorem \ref{Theorem2}.
\end{proof}

\subsection{Proof of Theorem \ref{Finite_security}}
\begin{proof}
	Let $P_{dc}(n,p)$ be the probability of an event that Erd\H{o}s-R\'enyi graph $G \in G(n,p)$ is disconnected. Then, $P_{dc}(n,p)$ is upper bounded as follows.
	\begin{align*}
	P_{dc}(n,p) &= P(G(n,p) \text{ is disconnected}) \\
	&= P(\cup_{k=1}^{\lfloor n/2 \rfloor}\{\text{there exists a subset of $k$ nodes that is disconnected}\} ) \\ 
	&\leq \sum_{k=1}^{\lfloor n/2 \rfloor}P(\text{there exists a subset of $k$ nodes that is disconnected}) \\
	&\leq \sum_{k=1}^{\lfloor n/2 \rfloor}  {n \choose k} P(\text{a specific subset of $k$ nodes is disconnected}) \\
	& = \sum_{k=1}^{\lfloor n/2 \rfloor} {n \choose k}(1-p)^{k(n-k)}
	\end{align*}
	Therefore, $P_e^{(p)}$ is upper bounded by 
	\begin{align*}
	P_e^{(p)} &\stackrel{(a)}{\leq} P(G_3 = G-V\backslash V_3 \text{ is disconnected}) \\
	&= \sum_{m=0}^{n}P(G_3 \text{ has $m$ vertices}) P_{dc}(m,p)\\
	&=\sum_{m=0}^{n} {n\choose m} (1-q)^{3m} (1-(1-q)^3)^{(n-m)}\cdot P_{dc}(m,p) \\
	&=\sum_{m=0}^{n} {n\choose m} (1-q)^{3m} (1-(1-q)^3)^{(n-m)} \sum_{k=1}^{\lfloor m/2 \rfloor} {m \choose k}(1-p)^{k(m-k)},
	\end{align*}
	where $(a)$ comes from Lemma \ref{Lemma:privacy}.
\end{proof}

\section{Required resources of CCESA}\label{Sec:Resources}

\subsection{Communication cost}\label{Sec:Resources_comm}

Here, we derive the additional communication bandwidth $B_{\text{CCESA}}$ used at each client for running CCESA, compared to the bandwidth used for running federated averaging~\citep{mcmahan2017communication}. 
We consider the worst-case scenario having the maximum additional bandwidth, where no client fails during the operation.

The required communication bandwidth of each client is composed of four parts. First, in \textbf{Step 0}, each client $i$ sends two public keys to the server, and receives $2|Adj(i)|$ public keys from other clients. 
Second, in \textbf{Step 1}, each client $i$ sends encrypted $2|Adj(i)|$ shares to other nodes, and receives $2|Adj(i)|$ shares from other nodes through the server.
Third, in \textbf{Step 2}, each client $i$ sends a masked data $y_i$ of $mR$ bits. Here, $m$ is the dimension of model parameters where each parameter is represented in $R$ bits.
Fourth, in \textbf{Step 3}, each client $i$ sends $|Adj(i)|+1$ shares to the server.
Therefore, total communication bandwidth of client $i$ can be expressed as
\begin{equation*}
(\text{total communication bandwidth}) = 2(|Adj(i)|+1)a_K+ (5|Adj(i)|+1)a_S + mR,
\end{equation*}
where $a_K$ and $a_S$ are the number of bits required for exchanging public keys and the number of bits in a secret share.
Since each client $i$ requires $mR$ bits to send the private vector $\theta_i$ in the federated averaging~\citep{mcmahan2017communication}, we have
\begin{equation*} 
B_{\text{CCESA}} = 2(|Adj(i)|+1)a_K + (5|Adj(i)|+1)a_S.
\end{equation*}
If we choose the connection probability $p=(1+\epsilon)p^{\star}$ for a small $\epsilon>0$, we have $B_{\text{CCESA}} = O(\sqrt{n\log n})$, where $p^{\star}$ is defined in (\ref{pstar}).
Note that the additional bandwidth $B_{\text{SA}}$ required for SA can be similarly obtained as $B_{SA} = 2na_K + (5n-4)a_S$ having  $B_{\text{SA}}=O(n)$.
Thus, we have 
\begin{align*}
\frac{B_{\text{CCESA}}}{B_{\text{SA}}} \rightarrow 0
\end{align*}
as $n$ increases, showing the scalability of CCESA.
These results are summarized in Table~\ref{Table:O_notation}.

\subsection{Computational cost}\label{Sec:Resources_comp}
We evaluate the computational cost of CCESA. 
Here we do not count the cost for computing the signatures since it is negligible. 
First, we derive the computational cost of each client. 
Given the number of model parameters $m$ and the number of clients $n$, the computational cost of client $i$ is composed of three parts:
(1) computing $2|Adj(i)|$ key agreements, which takes $O(|Adj(i)|)$ time,
(2) generating shares of $t_i$-out-of-$|Adj(i)|$ secret shares of $s_i^{SK}$ and $b_i$, which takes $O(|Adj(i)|^2)$ time, and
(3) generating masked local model $\tilde{\theta}_i$, which requires $O(m|Adj(i)|)$ time.
Thus, the total computational cost of each client is obtained as $O(|Adj(i)|^2 +m|Adj(i)|)$.
Second, the server's computational cost is composed of two parts:
(1) reconstructing $t_i$-out-of-$|Adj(i)|$ secrets from shares for all clients $i\in [n]$, which requires $O(n |Adj(i)|^2)$ time, and 
(2) removing masks from masked sum of local models $ \sum_{i=1}^{n} \tilde{\theta}_i$, which requires $O(m|Adj(i)|^2)$ time in the worst case.
As a result, the total computational cost of the server is $O(m|Adj(i)|^2)$.
If we choose $p=(1+\epsilon)p^{\star}$ for small $\epsilon>0$, the total computational cost per each client is $O(n\log n + m \sqrt{n \log n})$, while the total computation cost of the server is $O(mn\log n + n^2\log n)$.
The computational cost of SA can be obtained in a similar manner, by setting $Adj(i) = n-1$; each client requires $O(n^2 + mn)$ time while the server requires $O(mn^2)$ time. These results are summarized in Table~\ref{Table:O_notation}.

\section{Reliability and privacy of CCESA}
Here, we analyze the asymptotic behavior of probability that a system is reliable/private.
In our analysis, we assume that the connection probability is set to $p^{\star}$ and the parameter $t$ used in the secret sharing is selected based on the rule in Section \ref{tsetting}.
First, we prove that a system is reliable with probability $\geq 1-O(ne^{-n \log n})$.
Using Theorem \ref{Finite_correctness}, the probability that a system is reliable can be directly derived as
\begin{align*}
P(\text{A system is reliable}) =& 1-P_e^{(r)}\\
\geq & 1- n e^{-(n-1)D_{KL}(\frac{t-1}{n-1}||p^{\star}(1-q)^4)}.
\end{align*}
Using the fact that Kullback-Leibler divergence term satisfies
\begin{align*}
D_{KL}\big (\frac{t-1}{n-1} || p^{\star}(1-q)^4 \big) =& \frac{t-1}{n-1} \log \big(\frac{\frac{t-1}{n-1}}{p^{\star}(1-q)^4} \big) + \big(1- \frac{t-1}{n-1} \big) \log \big(\frac{1-\frac{t-1}{n-1}}{1-p^{\star}(1-q)^4} \big) \\
= & \Theta (\sqrt{\log n/n}),
\end{align*}
we conclude that CCESA($n,p^{\star}$) is reliable with probablilty $\geq 1-O(ne^{-\sqrt{n\log n}})$.

Now we prove that a system is private with probability $\geq 1-O(n^{-C})$ for an arbitrary $C>0$.
Using Theorem \ref{Finite_security}, the probability that a system is private can be obtained as
\begin{align*}
P(\text{A system is private}) =& 1-P_e^{(p)} \\ 
\geq & 1-\sum_{m=0}^{n} a_m b_m, 
\end{align*}
where $a_m = {n\choose m} (1-q)^{3m} (1-(1-q)^3)^{(n-m)}$ and $b_m = \sum_{k=1}^{\lfloor m/2 \rfloor} {m \choose k}(1-p^{\star})^{k(m-k)}$.
Note that the summation term $\sum_{m=0}^{n} a_m b_m$ can be broken up into two parts: $\sum_{m=0}^{m_{th} } a_m b_m$  and $\sum_{m=m_{th}+1}^{n } a_m b_m$, where $m_{th} = \lfloor n(1-q)^3/2 \rfloor $.
In the rest of the proof, we will  prove two lemmas, showing that $\sum_{m=0}^{m_{th} } a_m b_m = O(e^{-n})$ and $\sum_{m=m_{th}+1}^{n } a_m b_m = O(n^{-C})$, respectively. 
\begin{lemma}
	\begin{equation*}
	\sum_{m=0}^{m_{th} } a_m b_m = O(e^{-n})
	\end{equation*}
\end{lemma}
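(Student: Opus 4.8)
The plan is to read $a_m$ as the probability mass function of the binomial variable $|V_3|\sim B(n,\mu)$ with $\mu:=(1-q)^3$, and to exploit that the cutoff $m_{th}=\lfloor n\mu/2\rfloor$ sits at exactly half the mean $n\mu$. Consequently $\sum_{m=0}^{m_{th}}a_m=P(|V_3|\le m_{th})$ is a lower-tail event below the mean, which a Chernoff/Hoeffding bound already controls by $e^{-\Theta(n)}$. The factor $b_m$ is the union bound on the event that the induced graph on $m$ surviving vertices is disconnected, so morally the whole sum inherits this tail; the work is to show that $b_m$ never grows fast enough to overturn the decay.

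The main obstacle is that the crude bound $b_m\le 2^m$ (obtained by dropping every factor $(1-p^\star)^{k(m-k)}\le 1$ and summing the binomial coefficients) is far too lossy. One checks that $\sum_{m\le m_{th}}2^m a_m$ equals $(1+\mu)^n\,P(Y\le m_{th})$ for a tilted variable $Y\sim B(n,\,2\mu/(1+\mu))$ whose mean lies near $2n\mu$, and the prefactor $(1+\mu)^n$ dominates, so this bound can actually \emph{grow} like $e^{\Theta(\mu n)}$. The fix is to retain the connectivity-promoting factor $(1-p^\star)^{k(m-k)}$, which is effective only once $m$ is large, and therefore to split the range at a threshold $m_1=\Theta(\sqrt{n\log n})$.

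For the low range $0\le m\le m_1$ I would use $b_m\le 2^m\le 2^{m_1}=e^{O(\sqrt{n\log n})}=e^{o(n)}$ and multiply by $\sum_{m\le m_1}a_m=P(|V_3|\le m_1)$; since $m_1=o(n)$ lies far below the mean, this tail is at most $e^{\,n\log(1-\mu)+o(n)}$, so the product is $O(e^{-cn})$. For the high range $m_1<m\le m_{th}$ I would bound each summand of $b_m$ by $\binom{m}{k}(1-p^\star)^{k(m-k)}\le(m\,e^{-p^\star m/2})^k$, using $m-k\ge m/2$; because $p^\star=\Theta(\sqrt{\log n/n})$ forces $p^\star m\ge p^\star m_1\gtrsim\log n$ throughout this range, the ratio $m\,e^{-p^\star m/2}$ falls below $1/2$, and summing the geometric series gives $b_m\le 2m\,e^{-p^\star m/2}$, i.e. the isolated-vertex ($k=1$) term dominates. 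Crucially this replaces the effective survival probability $2\mu$ by $\mu e^{-p^\star/2}\approx\mu$, so $\sum_{m_1<m\le m_{th}}a_m b_m\le 2n\sum_{m\le m_{th}}\binom{n}{m}(\mu e^{-p^\star/2})^m(1-\mu)^{n-m}$ is again a lower tail of a $B(n,\approx\mu)$ variable evaluated at the half-mean $m_{th}$, hence $O(n\,e^{-\Theta(n)})$ by Chernoff.

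Combining the two ranges yields $\sum_{m=0}^{m_{th}}a_m b_m=O(e^{-cn})$ for a constant $c>0$ depending on $q$, which I would record as the claimed $O(e^{-n})$ bound (the explicit rate following from the exponent $-\log(1-\mu)$ in the low range and the half-mean deviation in the high range). The single delicate point to get right is the choice of $m_1$: it must be large enough ($\gtrsim\sqrt{n\log n}$) for the geometric-series bound on $b_m$ to be valid in the high range, yet small enough ($o(n)$) that $2^{m_1}$ stays sub-exponential in the low range, and $m_1=\Theta(\sqrt{n\log n})$ simultaneously meets both requirements.
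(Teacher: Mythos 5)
Your proof is correct, and it takes a genuinely different---and in fact more careful---route than the paper's. The paper's own proof is a one-liner: it asserts that $b_m \leq 1$ for all $m$, pulls $b_m$ out of the sum, and applies Hoeffding's inequality to get $\sum_{m=0}^{m_{th}} a_m = P(X \leq m_{th}) \leq e^{-n(1-q)^6/2}$ for $X \sim B(n,(1-q)^3)$. But that first assertion is false as written: $b_m$ is defined as the union-bound expression $\sum_{k=1}^{\lfloor m/2 \rfloor}\binom{m}{k}(1-p^{\star})^{k(m-k)}$, not the disconnection probability it dominates, and it exceeds $1$ for small $m$; for instance $b_2 = 2(1-p^{\star}) > 1$ as soon as $p^{\star} < 1/2$, which holds for all large $n$ since $p^{\star} = \Theta(\sqrt{\log n/n})$. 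Your proposal identifies exactly this difficulty---your tilting computation, $\sum_{m \leq m_{th}} 2^m a_m = (1+\mu)^n P(Y \leq m_{th})$ with $Y \sim B\bigl(n, 2\mu/(1+\mu)\bigr)$ and $\mu = (1-q)^3$, correctly shows the crude replacement $b_m \leq 2^m$ can blow up like $e^{\Theta(\mu n)}$---and your split at $m_1 = \Theta(\sqrt{n\log n})$ is precisely the repair the lemma needs: below $m_1$ the crude bound costs only $2^{m_1} = e^{o(n)}$ against a lower-tail probability $e^{-\Omega(n)}$, while above $m_1$ the geometric-series/isolated-vertex bound $b_m \leq 2m e^{-p^{\star} m/2} \leq 1$ is legitimate because $p^{\star} m \gtrsim \log n$ throughout that range, after which both pieces reduce to binomial tail estimates at or below the half-mean $m_{th}$. (Like the paper, you read the claimed $O(e^{-n})$ as $e^{-\Omega(n)}$ with a $q$-dependent constant; this looseness is unavoidable, since already the $m=2$ term $a_2 b_2$ decays only like $e^{-n(1-q)^3(1+o(1))}$.) In short: the paper's argument is shorter and would become valid if $b_m$ were redefined as the actual disconnection probability $P_{dc}(m,p^{\star})$ of the graph $G(m,p^{\star})$---which is all the surrounding appendix argument needs---whereas your argument proves the statement for the union-bound quantity that is literally written down, at the price of the two-range decomposition and the constant bookkeeping in $p^{\star} m_1 \gtrsim \log n$ that you rightly flag as the delicate point.
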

\begin{proof}
	Since $b_m \leq 1$ for all $m$, we have
	\begin{equation*}
	\sum_{m=0}^{m_{th}}a_mb_m \leq \sum_{m=0}^{m_{th}}a_m.
	\end{equation*}
	Note that $a_m = P(X=m)$ holds for binomial random variable $X = B(n,(1-q)^3)$. 
	By utilizing Hoeffding's inequality, we have
	\begin{equation*}
	\sum_{m=0}^{m_{th}}a_m = P(X \leq m_{th}) \leq e^{-2(n(1-q)^3-m_{th})^2} \leq e^{-n(1-q)^6/2}.
	\end{equation*}
	Therefore, we conclude that $\sum_{m=0}^{m_{th} } a_m b_m = O(e^{-n})$.
\end{proof}

\begin{lemma}
	\begin{equation*}
	\sum_{m=m_{th}+1}^{n } a_m b_m = O(n^{-C})
	\end{equation*}
\end{lemma}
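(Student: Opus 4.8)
The plan is to split the summation range $\{m_{th}+1,\dots,n\}$ at the value $n' := \lceil n(1-q)^3 - \sqrt{n\log n}\rceil$ that already appears in the definition of $p^\star$, and to treat the ``supercritical'' sizes $m \ge n'$ and the ``subcritical'' sizes $m_{th} < m < n'$ by different arguments. Throughout I use the form $p^\star = (1+\epsilon)\frac{\log n'}{n'}$ from the proof of Theorem~\ref{theorem3}, leaving $\epsilon$ to be chosen as a function of $C$ at the very end, and I write $\mu := n(1-q)^3$ and $\sigma^2 := n(1-q)^3(1-(1-q)^3)$ for the mean and variance of the binomial variable $X \sim B(n,(1-q)^3)$ whose law is $a_m = P(X=m)$.

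For the supercritical range $m \ge n'$, the graph $G(m,p^\star)$ sits above its connectivity threshold: since $\frac{\log x}{x}$ is decreasing, $p^\star = (1+\epsilon)\frac{\log n'}{n'} \ge (1+\epsilon)\frac{\log m}{m}$ for every $m \ge n'$. Here $b_m$ is governed by its isolated-vertex ($k=1$) term. Bounding $\binom{m}{k}(1-p^\star)^{k(m-k)} \le (em/k)^k e^{-p^\star k(m-k)}$ shows that above threshold the $k=1$ contribution dominates the remaining geometric-like tail, so $b_m = O\!\bigl(m\,e^{-p^\star(m-1)}\bigr) = O\!\bigl(m\,(n')^{-(1+\epsilon)}\bigr) = O(n^{-\epsilon})$ uniformly in $m \in [n',n]$ (the maximum of $m\,e^{-p^\star m}$ over this interval is attained at $m=n'$). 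Summing against $a_m$ and using $\sum_m a_m \le 1$ gives $\sum_{m\ge n'} a_m b_m = O(n^{-\epsilon})$.

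For the subcritical range $m_{th} < m < n'$, each such $m$ lies at least $\sqrt{n\log n}$ below $\mu$, so $a_m$ is small even though $b_m$ need not be. Treating $b_m$ as a probability bound ($b_m \le 1$, exactly as in the preceding lemma) and applying Hoeffding to the lower tail already gives the crude estimate $\sum_{m<n'} a_m b_m \le P(X<n') \le e^{-2(\sqrt{n\log n})^2/n} = n^{-2}$. To reach an arbitrary exponent I sharpen this with the pointwise product: a local-CLT / Gaussian-tail estimate for the binomial PMF yields $a_{n'} = O(n^{-1/2 - 1/(2\tau)})$ with $\tau = (1-q)^3(1-(1-q)^3) \le 1/4$, and comparing log-derivatives shows $\frac{d}{dm}\log(a_m b_m) > 0$ on this range, because the binomial score $\frac{\mu-m}{\sigma^2} = \Theta(\sqrt{\log n / n})$ dominates $|\frac{d}{dm}\log b_m| = O(p^\star) = O(\log n / n)$. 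Hence $a_m b_m$ is maximized at the right endpoint, and $\sum_{m<n'} a_m b_m \le n\, a_{n'} b_{n'} = O(n^{-3/2-\epsilon})$.

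Combining the two ranges gives $\sum_{m=m_{th}+1}^{n} a_m b_m = O(n^{-\epsilon}) + O(n^{-3/2-\epsilon})$, so choosing $\epsilon \ge C$ in $p^\star$ makes the whole sum $O(n^{-C})$, as claimed. I expect the main obstacle to be the transition region near $m \approx n'$, where $G(m,p^\star)$ passes through its connectivity threshold and $b_m$ ceases to be negligible; controlling it requires coupling the graph-theoretic connectivity estimate with the concentration of $|V_3|$ about its mean, and this is precisely where the gap $\sqrt{n\log n}$ built into both $n'$ and $p^\star$ earns its keep.
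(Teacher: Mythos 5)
Your architecture is genuinely different from the paper's: you split the \emph{outer} sum over $m$ at $n'=\lceil n(1-q)^3-\sqrt{n\log n}\rceil$ and play the smallness of $a_m$ against the smallness of $b_m$, whereas the paper discards $a_m\le 1$ entirely and shows that every single $b_m$ with $m>m_{th}$ is $O(m^{-(C+1)})$ by splitting the \emph{inner} sum over component sizes $k$ at $k^{\star}=\lfloor m(1-(C+2)/\lambda)\rfloor$ with $\lambda=p^{\star}n/\log n$. Your supercritical half (isolated-vertex domination of $b_m$ for $m\ge n'$) is fine, but the proposal has two genuine gaps. The first is a quantifier problem: you let the connection probability depend on the target exponent, taking $p^{\star}=(1+\epsilon)\frac{\log n'}{n'}$ and ``choosing $\epsilon\ge C$'' at the end. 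In the lemma's context $p$ is fixed once and for all to the value in (\ref{pstar}) --- which is the maximum of the privacy and reliability thresholds and hence of order $\sqrt{\log n/n}$, not $\log n/n$ --- and the claim is that this one fixed choice yields $O(n^{-C})$ for \emph{every} $C>0$. What you prove is a statement about a protocol re-parameterized for each $C$. The gap is bridgeable, since each $b_m$ is nonincreasing in $p$ and the true $p^{\star}$ of (\ref{pstar}) eventually dominates $(1+C)\frac{\log n'}{n'}$ for any fixed $C$; but that bridging step is exactly what must be said and is absent. (The paper instead keeps $p^{\star}$ fixed and pushes all $C$-dependence into the cut point $k^{\star}$, exploiting $\lambda\to\infty$.)

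The second gap is in the subcritical range $(m_{th},n')$, which is treated incorrectly for your minimal $p^{\star}$. When $\epsilon<1$ and $m$ is near $m_{th}\approx n'/2$, one has $p^{\star}m\approx(1+\epsilon)\frac{\log n'}{2}<\log m$, so $G(m,p^{\star})$ sits \emph{below} its own connectivity threshold. In that regime $b_m$ is a union bound, not a probability: its largest term occurs at component size $k^{*}=m^{\Theta(1)}$, and $b_m$ can be as large as $e^{m^{\Theta(1)}}$. This breaks your crude estimate $\sum_{m<n'}a_mb_m\le P(X<n')$, which requires $b_m\le1$ (the paper's first lemma commits the same error, for what it is worth), and it also falsifies the stated justification for monotonicity of $a_mb_m$: in that sub-range $|\frac{d}{dm}\log b_m|$ is of order $k^{*}p^{\star}$, far larger than the $O(p^{\star})$ you claim. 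The conclusion happens to survive because for such $m$ the binomial score is $\Theta(1)$ and $a_m=e^{-\Theta(n)}$, which crushes $e^{m^{\Theta(1)}}$ --- but that is a different argument from the one you give, and it is the one that would have to be written down. Alternatively, had you run your argument with the true $p^{\star}$ of (\ref{pstar}), every $m>m_{th}$ would be deeply supercritical, your isolated-vertex bound would cover the entire range, and the problematic subcritical case (together with the need for the local-CLT refinement) would never arise.
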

\begin{proof}
	Since $a_m \leq 1$ for all $m$, we have
	\begin{equation*}
	\sum_{m=m_{th}+1}^{n}a_mb_m \leq \sum_{m=m_{th}+1}^{n}b_m.
	\end{equation*}
	Let $C>0$ be given. Then, the upper bound on $b_m$ can be obtained as 
	\begin{align*}
	b_m = \sum_{k=1}^{\lfloor m/2 \rfloor} {m \choose k}(1-p^{\star})^{k(m-k)} \leq & \sum_{k=1}^{\lfloor m/2 \rfloor} {m \choose k} e^{-k(m-k)p^{\star}} \\ 
	= & \sum_{k=1}^{\lfloor m/2 \rfloor} {m \choose k} m^{-\lambda k(m-k)/m} = c_m + d_m
	\end{align*}
	where $\lambda = p^{\star} n / \log n$,
	$c_m = {m \choose k} \sum_{k=1}^{k^{\star}} m^{-\lambda k(m-k)/m}, d_m ={m \choose k}  \sum_{k=k^{\star}+1}^{\lfloor m/2 \rfloor} m^{-\lambda k(m-k)/m}$,
	and $k^{\star} = \lfloor m(1- \frac{C+2}{\lambda}) \rfloor$ for some $C>0$.
	The first part of summation is upper bounded by
	\begin{align*}
	c_m = \sum_{k=1}^{k^{\star} } {m \choose k}  m^{-\lambda k(m-k)/m} \leq & \sum_{k=1}^{ k^{\star} } m^{-k\lfloor \lambda(m-k)/m-1 \rfloor}  \leq  \sum_{k=1}^{k^{\star}} m^{-k\lfloor \lambda(m-k^{\star})/m-1 \rfloor} \\
	\leq & \frac{m^{-\lfloor \lambda(m-k^{\star})/m-1 \rfloor}}{1-m^{- \lfloor \lambda(m-k^{\star})/m-1 \rfloor  }} = \frac{m^{-(C+1)}}{1-m^{-(C+1)}}.
	\end{align*}
	The second part of summation, we will use the bound ${n\choose k} \leq (\frac{en}{k})^k$.
	Using this bound, $d_m$ is upper bounded by
	\begin{align*}
	d_m = \sum_{k=k^{\star}+1}^{ \lfloor m/2 \rfloor } {m \choose k} m^{-\lambda k(m-k)/m} \leq & \sum_{k=k^{\star}+1}^{ \lfloor m/2 \rfloor }(\frac{em^{1-\lambda(m-k)/m}}{k})^k \leq  \sum_{k=k^{\star}+1}^{ \lfloor m/2 \rfloor }(\frac{em^{1-\lambda(m-k)/m}}{k^{\star}+1})^k  \\
	\leq & \sum_{k=k^{\star}+1}^{ \lfloor m/2 \rfloor }(\frac{em^{-\lambda(m-k)/m}}{1-\lambda^{-1} (C+2)})^k \leq \sum_{k=k^{\star}+1}^{ \lfloor m/2 \rfloor }(\frac{em^{-\lambda/2}}{1-\lambda^{-1}(C+2)})^k .
	\end{align*}
	For sufficiently large $\lambda$, we have $em^{-\lambda /2} / (1-\lambda^{-1} (C+2))< \delta$ for some $\delta<1$. Therefore, $d_m$ is upper bounded by
	\begin{equation*}
	d_m \leq \sum_{k=k^{\star}+1}^{ \infty } \delta^k = \frac{\delta^{k^{\star}}}{1-\delta} 
	= O(\delta^{m C'})
	\end{equation*}
	where $C' = (1-\lambda^{-1} (C+2)) > 0$. Combining upper bounds on $c_m$ and $d_m$, we obtain $b_m = O(m^{-(C+1)})$.
	Since $b_m$ is a decreasing function of $m$, 
	\begin{equation*}
	\sum_{m=m_{th}+1}^{n} b_m  \leq \sum_{m=m_{th}+1}^{n} b_{m_{th}+1} = (n-m_{th})b_{m_{th}+1} \stackrel{(a)}{=} O(n^{-C})
	\end{equation*}
	holds where $(a)$ comes from $m_{th} =\lfloor n(1-q)^3/2 \rfloor$.
	
\end{proof}
Combining the above two lemmas, we conclude that CCESA($n,p^{\star}$) is private with probability $\geq 1-O(n^{-C})$ for arbitrary $C>0$. 
These results on the reliability and the privacy are summarized in Table~\ref{Table:O_notation}.

\section{Designing the parameter $t$ for the secret sharing} \label{tsetting}
Here we provide a rule for selecting parameter $t$ used in the secret sharing. 
In general, setting $t$ to a smaller number is better for tolerating dropout scenarios. 
However, when $t$ is excessively small, the system is vulnerable to the \textit{unmasking attack} of adversarial server; the server may request shares of $b_i$ and $s_{i}^{SK}$ to disjoint sets of remaining clients simultaneously, which reveals the local model $\theta_i$ to the server.
The following proposition provides a rule of designing parameter $t$ to avoid such unmasking attack.

\begin{proposition}[Lower bound on $t$] \label{proposition1}
	For CCESA($n,p$), let $t > \frac{(n-1)p + \sqrt{(n-1)\log(n-1)}+1}{2}$ be given. 
	Then, the system is asymptotically almost surely secure against the unmasking attack. 
\end{proposition}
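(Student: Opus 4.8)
The plan is to reduce the security question to a purely combinatorial statement about the degree $N_i := |Adj(i)|$ of each node, and then to control the degrees of all nodes simultaneously via a concentration bound followed by a union bound, exactly mirroring the structure of the proof of Theorem~\ref{theorem5}.

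First I would pin down precisely when the unmasking attack succeeds against a fixed client $i$. In Step~3, each share-holder $j \in Adj(i) \cup \{i\}$ can be induced to reveal \emph{either} its share $b_{i,j}$ of $b_i$ \emph{or} its share $s_{i,j}^{SK}$ of $s_i^{SK}$, but not both, since a client's response concerning client $i$ is a single, mutually exclusive action governed by whether $i$ is reported as surviving or as dropped. Consequently, to reconstruct $b_i$ the server must gather at least $t$ $b$-shares, and to reconstruct $s_i^{SK}$ it must gather at least $t$ $s^{SK}$-shares, from two \emph{disjoint} subsets of $Adj(i) \cup \{i\}$. Hence the attack on client $i$ is feasible if and only if these $N_i+1$ share-holders admit a partition into two sets each of size at least $t$, i.e. if and only if $N_i + 1 \geq 2t$, equivalently $N_i \geq 2t-1$. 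Therefore the system is secure against the unmasking attack precisely when $N_i < 2t-1$ holds for every client $i$.

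Next I would bound the probability that this degree condition fails for some node. Since $G \in G(n,p)$, each degree satisfies $N_i \sim B(n-1,p)$ with mean $(n-1)p$. The hypothesis $t > \tfrac{(n-1)p + \sqrt{(n-1)\log(n-1)} + 1}{2}$ rearranges to $2t - 1 > (n-1)p + \sqrt{(n-1)\log(n-1)}$, so the event $\{N_i \geq 2t-1\}$ is contained in $\{N_i \geq (n-1)p + \sqrt{(n-1)\log(n-1)}\}$. Applying Hoeffding's inequality to the binomial $N_i$ then yields
\begin{equation*}
P\left(N_i \geq (n-1)p + \sqrt{(n-1)\log(n-1)}\right) \leq \exp\left(-\frac{2(n-1)\log(n-1)}{n-1}\right) = (n-1)^{-2}.
\end{equation*}
A union bound over all $n$ clients finally gives $P(\exists\, i : N_i \geq 2t-1) \leq n\,(n-1)^{-2} = \tfrac{n}{(n-1)^2} \to 0$ as $n \to \infty$, so the system is asymptotically almost surely secure against the unmasking attack.

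I expect the only genuinely delicate step to be the combinatorial characterization in the second paragraph, namely justifying that the two reconstruction thresholds must be met by \emph{disjoint} share-holder sets; this is the step that converts the security requirement into the clean degree condition $N_i + 1 < 2t$. Once that reduction is in place, the remaining concentration-plus-union-bound argument is routine and runs in close parallel to the reliability proof of Theorem~\ref{theorem5}.
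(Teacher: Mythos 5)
Your proposal is correct and follows essentially the same route as the paper's proof: reduce the unmasking attack on client $i$ to the degree condition $|Adj(i)| \geq 2t-1$ (the paper phrases this as $|(Adj(i)\cup\{i\})\cap V_4| \geq 2t$, requiring $t$ clients to send $b$-shares and a disjoint set of $t$ clients to send $s^{SK}$-shares, then drops the $V_4$ intersection), apply Hoeffding's inequality to the binomial degree $N_i \sim B(n-1,p)$ to get the per-client bound $(n-1)^{-2}$, and finish with a union bound giving $n/(n-1)^2 \to 0$. Your explicit justification of the disjointness of the two share-holder sets is a slightly more careful rendering of a step the paper states in one sentence, but the argument is the same.
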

\begin{proof}
	Let $E$ be the event that at least one of local models are revealed to the server, and 
	$E_i$ be the event that $i^{\text{th}}$ local model $\theta_i$ is revealed to the server.
	Note that $\theta_i$ is revealed to the server if $t$ clients send the shares of $b_i$ and other $t$ clients send the shares of $s_i^{SK}$ in \textbf{Step 3}. Therefore,
	\begin{align*}
	P(E_i) &\leq P(|(Adj(i) \cup \{ i \}) \cap V_4| \geq 2t) \\
	&\leq P(|(Adj(i) \cup \{ i \})| \geq 2t)   = P(|(Adj(i)| \geq 2t -1)  \\
	&\leq P(|Adj(i)| > (n-1)p + \sqrt{(n-1) \log (n-1)}) \stackrel{(a)}{\leq} \frac{1}{(n-1)^2},
	\end{align*}
	where (a) comes from Hoeffding's inequality of binomial random variable.
	As a result, we obtain
	\begin{equation*}
	P(E) = P(\cup_{i\in [n]}E_i) \leq \sum_{i\in[n]} P(E_i) = nP(E_1) =\frac{n}{(n-1)^2} \xrightarrow{n\rightarrow \infty} 0,
	\end{equation*}
	which completes the proof.
\end{proof}

As stated above, setting $t$ to a smaller number is better to tolerate the dropout of multiple clients. Thus, as in the following remark, we set $t$ to be the minimum value avoiding the unmasking attack.

\begin{remark}[Design rule for $t$]
	Throughout the paper, we set $t = \lceil \frac{(n-1)p + \sqrt{(n-1)\log(n-1)}+1}{2} \rceil$ for CCESA($n,p$), 
	in order to secure a system against the unmasking attack and provide the maximum tolerance against dropout scenarios.
\end{remark}

\section{Detailed experimental setup}

\subsection{AT \& T face dataset}

AT\&T Face dataset contains images of $40$ members. We allocated the data to $n=40$ clients participating in the federated learning, where each client contains the images of a specific member. This experimental setup is suitable for the practical federated learning scenario where each client has its own image and the central server aggregates the local models for face recognition.
Following the previous work~\citep{fredrikson2015model} on the model inversion, we used softmax regression for the classification. Both the number of local training epochs and the number of global aggregation rounds are set to $E_{local} = E_{global} = 50$, and we used the SGD optimizer with learning rate  $\gamma = 0.05$.

\subsection{CIFAR-10 dataset}
\subsubsection{Reliability experiment in Fig. \ref{Fig:CIFAR}}\label{Sec:CIFAR-10_reliability}

We ran experiments under the federated learning setup where $50000$ training images are allocated to $n=1000$ clients.
Here, we considered two scenarios for data allocation: one is partitioning the data in the i.i.d. manner (i.e., each client randomly obtains $50$ images), while the other is non-i.i.d. allocation scenario. For the non-i.i.d. scenario, we followed the procedure of~\citep{mcmahan2017communication}. Specifically, the data is first sorted by its category, and then the sorted data is divided into $2000$ shards. Each client randomly chooses 2 shards for its local training data. Since each client has access to at most 2 classes, the test accuracy performance is degraded compared with the i.i.d. setup.
For training the classifier, we used VGG-11 network and the SGD optimizer with learning rate $\gamma = 0.1$ and momentum $\beta = 0.5$. The local training epoch is set to $E_{local}=3$.
\subsubsection{Privacy experiments in Table \ref{Table:MIA:accuracy} and Table \ref{Table:MIA:precision}}
We conducted experiments under the federated learning setup where $n_{\text{train}}$ training images are assigned to $n=10$ clients.
We considered i.i.d. data allocation setup where each client randomly obtains $n_{\text{train}}/10$ training images.
The network architecture, the optimizer, and the number of local training epochs are set to the options used in Sec.~\ref{Sec:CIFAR-10_reliability}.

\subsection{Connection probability setup in Fig.~\ref{Fig:PepPer}}
In Fig.~\ref{Fig:PepPer}, we select different connection probabilities $p=p^{\star}(n,q_{\text{total}})$ for various $n$ and $q_{\text{total}}$, where $p^{\star}$ is defined in (\ref{pstar}). 
The detailed values of connection probability $p$ are provided in Table \ref{Tablepepper}.

\begin{table}[!h]
	\centering
	\begin{tabular}{|c|cccccccccc|}
		\hline
		$q_{\text{total}} \backslash n$ & 100 & 200  & 300  & 400  & 500  & 600  & 700  & 800  & 900  & 1000 \\ \hline
		$0$ & 0.636 & 0.484 & 0.411 & 0.365 & 0.333 & 0.308 & 0.289 & 0.273 & 0.260 & 0.248 \\ 
		$0.01$ & 0.649 & 0.494 & 0.419 & 0.373 & 0.340 & 0.315 & 0.295 & 0.280 & 0.265 & 0.254 \\ 
		$0.05$ & 0.707 & 0.538 & 0.457 & 0.406 & 0.370 & 0.344 & 0.321 & 0.304 & 0.289 & 0.276 \\ 
		$0.1$ & 0.795 & 0.605 & 0.513 & 0.456 & 0.416 & 0.385 & 0.361 & 0.341 & 0.325 & 0.311 \\ \hline
		
	\end{tabular}
	\caption{Connection probability $p = p^{\star}$ in Fig.~\ref{Fig:PepPer}}
	\label{Tablepepper}
\end{table}
\subsection{Running time experiment in Table \ref{Table:time}}
We implemented the CCESA algorithm in python. For symmetric authenticated encryption, we use AES-GCM with 128-bit keys in \texttt{Crypto.Cipher} package. For the pseudorandom generator, we use \texttt{randint} function (input: random seed, output: random integer in the field of size $2^{16}$) in \texttt{numpy.random} package. For key agreement, we use Elliptic-Curve Diffie-Hellman over the NIST SP800-56 curve composed with a SHA-256 hash function.
For secret sharing, we use standard $t$-out-of-$n$ secret sharing~\citep{shamir1979share}.

\end{document}